\DeclareMathOperator*{\argmax}{argmax\,}
\DeclareMathOperator*{\argmin}{argmin\,}
\newtheorem{theorem}{Theorem}
\newtheorem{lemma}[theorem]{Lemma}
\newtheorem{definition}[theorem]{Definition}
\newtheorem{proposition}{Proposition}
\theoremstyle{remark}
\newtheorem{remark}[theorem]{Remark}
\renewcommand{\Function}[2]{%
  \csname ALG@cmd@\ALG@L @Function\endcsname{#1}{#2}%
  \def\jayden@currentfunction{#1}%
}
\newcommand{\funclabel}[1]{%
  \@bsphack
  \protected@write\@auxout{}{%
    \string\newlabel{#1}{{\jayden@currentfunction}{\thepage}}%
  }%
  \@esphack
}
\title{Minimum Description Length Clustering to Measure Meaningful Image Complexity}
\begin{document}

\shorttitle{MDL Clustering for Meaningful Image Complexity}    

\shortauthors{Mahon and Lukasiewicz}  

%
\author[1,3]{Louis Mahon}[type=editor,
       auid=1,
       orcid=0000-0003-0571-4611,
       ]

\cortext[1]{Corresponding author}

\ead{oneillml@tcd.ie}


\author[2,3]{Thomas Lukasiewicz}[orcid=0000-0002-7644-1668]


\affiliation[1]{organization={School of Informatics, University of Edinburgh, UK}}
\affiliation[2]{organization={Institute of Logic and Computation, Vienna University of Technology, Austria}}
\affiliation[3]{organization={Department of Computer Science, University of Oxford, UK}}
\maketitle

\doublespacing

\begin{abstract}
       We present a new image complexity metric. Existing complexity metrics cannot distinguish meaningful content from noise, and give a high score to white noise images, which contain no meaningful information. We use the minimum description length principle to determine the number of clusters and designate certain points as outliers and, hence, correctly assign white noise a low score. 
       The presented method is a step towards humans' ability to detect when data contain a meaningful pattern. It also has similarities to theoretical ideas for measuring meaningful complexity.
       We conduct experiments on seven different sets of images, which show that our method assigns the most accurate scores to all images considered. Additionally, comparing the different levels of the hierarchy of clusters can reveal how complexity manifests at different scales, from local detail to global structure. We then present ablation studies showing the contribution of the components of our method, and that it continues to assign reasonable scores when the inputs are modified in certain ways, including the addition of Gaussian noise and the lowering of the resolution. Code is available at \url{https://github.com/Lou1sM/assemblies}.
\end{abstract}

\begin{graphicalabstract}
\includegraphics[width=\textwidth]{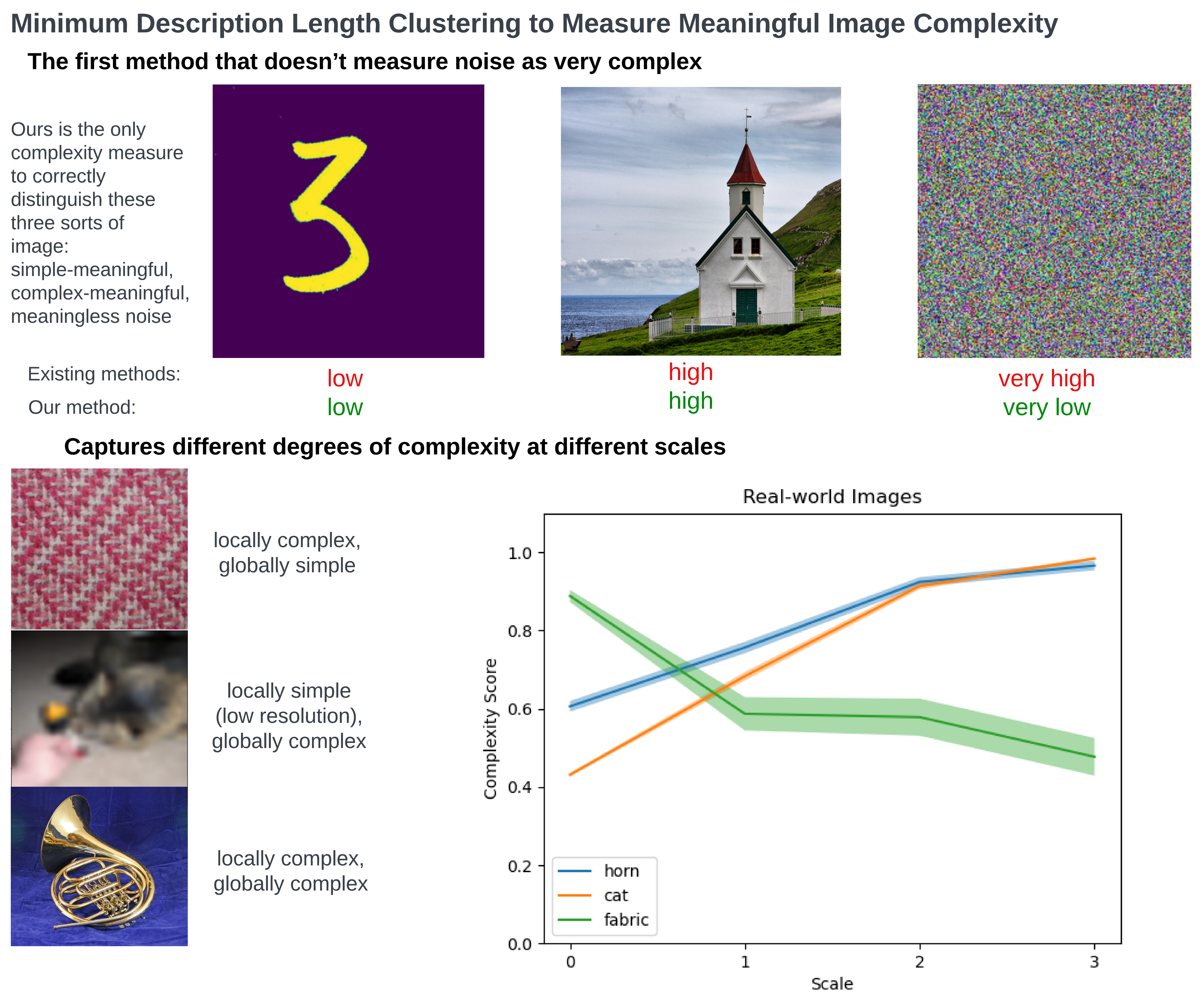}
\end{graphicalabstract}

\section{Introduction} \label{sec:introduction}
Pattern recognition and machine learning typically concern the case where we already know that the given data have some pattern, and we want a method that can automatically discover what the pattern is. In this paper, we address the problem of determining whether the data have any meaningful pattern to begin with, or whether it contains no or only very simple systematic structure, a problem that might be called pattern detection. Humans are highly proficient at recognizing patterns such as words in a speech signal or objects in a video, but even in the absence of explicit recognition, we can often detect when there is a pattern there at all. For example, we hear speech in a foreign language that we do not understand, but can still tell that there is some meaningful structure that could be recognized, unlike ambient city noise or white noise on the radio. Similarly, we may see an abstruse technical diagram and realize that there is something meaningfully complex there, even if we do not know what it is, unlike an image of a blank wall. We study this ability to recognize complexity through the development of a complexity metric. Data with a rich pattern should be scored as high complexity, whatever that pattern is, and unstructured or simply structured data should be scored as low complexity.   

There is unavoidable subjectivity in measuring complexity quantitatively. This is always the case when defining a new metric. We cannot begin the investigation of a complexity metric by defining what complexity is, that would be to put the cart before the horse. Inevitably, the investigation involves exploring what complexity is, not just how to measure it, that is, the definition of complexity and the specification of a complexity metric are two sides of the same thing, the latter is really an instantiation of the former. For example, if one defines complex images as those in which there is a high variation between all the pixel values, then it is natural to use the entropy of pixel values as a complexity metric; or if one defines complex images as those in which nearby pixel intensities tend to be very different from each other, then another metric is the obvious choice (grey-level co-occurrence matrix; see Section \ref{sec:experimental-evaluation}). This renders unavailable the standard blueprint for applied machine learning research of showing that a novel method outperforms existing methods on some quantifiable task or benchmark, because the field does not have such a benchmark for measuring image complexity. What we do have is a vague idea of what complexity is, vague but still powerful and important. The task is to translate this vague idea into something computable.

There are several applications that benefit from being able to measure visual complexity. Remote sensing often gathers large numbers of images, most of which depict nothing interesting, such as empty desert or ocean, but occasionally capture important information, such as the gathering of fauna or sudden change in flora. It is useful to automatically filter out the simple images before manual inspection \citep{falconer2004fractal,sun2006fractal,yang2000analysis}. In the field of psychometrics, there is interest in understanding what humans will find visually interesting or aesthetic, and this often involves a component modelling complexity, e.g., \citep{forsythe2008confounds, carballal2020comparison}. Relatedly, complexity perception influences how humans regard digital interfaces such as graphical websites, and automated complexity measures have been proposed to guide interface design \citep{stickel2010xaos}. Being able to distinguish signal from noise is especially relevant to remote sensing, where images often become corrupted by noise due to the sensing equipment or various post-processing steps
\citep{chioukh2014noise,narayanan2003effects,landgrebe1986noise}. Much work has been done to reduce noise in remote sensing images \citep{chang2016remote,rasti2018noise} and to improve the robustness of image processing methods to noise \citep{huang2020efficient,duan2019noise}.

Most existing techniques for quantifying and measuring image complexity (discussed further in Section~\ref{sec:related-work}) are based on measuring intricacy, the idea being that the more intricate it is and the more dissimilar its parts, the more complex it is. 
This is relatively easy to measure, but it is incomplete for two reasons. Firstly, and most importantly, it does not distinguish between meaningful intricacy (signal) and meaningless intricacy (noise). Using intricacy as a measure of complexity means that a white-noise image, where the pixel values are chosen independently at random, is measured as highly, perhaps even maximally, complex, because there is a high degree of difference between neighbouring pixels. Note that this problem even holds for Kolmogorov complexity, where a standard result is that most bitstrings are almost incompressible, and so, with very high probability, a random bitstring will receive near maximum complexity score. There has been theoretical work to divide the Kolmogorov complexity into meaningful information and noise, using, e.g., `sophistication' \citep{koppel1987complexity,vitanyi2006meaningful} or `effective complexity' \cite{ay2010effective,gell1996information}. Our applied method can be thought of as an instantiation of the high-level idea in these theoretical methods. We return to this comparison in Section~\ref{subsec:comparison-to-theoretical}.

A second disadvantage of variation as a complexity measure is that it cannot capture the fact that images can have a different complexity at different scales. A blurry photograph of a complex scene, for example, is locally simple but globally complex, while a finely-detailed but repetitive pattern is the opposite.

Rather than meaning a high degree of variation, we instead conceive of complexity as `taking a large number of steps to assemble'. An image can be thought of as being built out of pixels, local groups of pixels are combined to form patches, groups of neighbouring patches are combined to form super-patches etc. Quantifying complexity based on the assembly process is the approach taken in the theory of assembly pathways \citep{cronin2006imitation,marshall2019quantifying}, originally for the purpose of quantifying the complexity of molecules to aid in the search for extraterrestrial life \citep{marshall2021identifying,schwieterman2018exoplanet}. The pathway assembly index of an object is the minimum number of combinations needed to produce it from simple parts, where repeated components can be reused without adding to the count. In order to discretize the structure of the image and allow the assembly index to be applied, we employ clustering. For the first level of the hierarchy, we cluster the pixel values and replace them with their cluster index. For higher levels, we cluster the multisets of cluster indices from the level below. 

Another advantage of discretizing is that we can then easily compute entropy. Taking the entropy of a continuous image is difficult, we must use some approximation of differential entropy \citep{hulle2005edgeworth,pichler2022differential}. In our case, however, we are dealing with discrete cluster labels, so we need only compute the entropy of a categorical distribution, which is easy. At each scale (i.e., hierarchy level), we compute the entropy of the multisets of cluster indices across the image to quantify complexity. The total complexity score is the sum of this entropy at each scale. We can also examine the entropy for individual scales to get an indication of the local vs. global complexity in the image: low scales (i.e., small patch sizes) measure local complexity, whereas higher scales capture more global structure (as shown in Section \ref{subsec:complexity-at-scales}). 

At each level of the hierarchy, the cluster indices produced, and hence the complexity score, depend on $K$, the number of clusters in the clustering model. We choose $K$ in a theoretically sound way via the minimum description length (MDL) principle \citep{rissanen1983universal}. MDL says that we should choose the model that can completely represent the given data in the fewest number of bits. Clustering can be interpreted as compression, where we encode each point by its cluster index, along with the residual error of how it differs from the centroid of that cluster. Treating each cluster as a probability distribution, and employing the Kraft-McMillan inequality, we see that the residual error for a point $x$ under the cluster probability distribution $p$ can be represented using $-\log p(x)$ bits. Representing the data under the clustering model takes $-\sum_x\log p(x)$ bits, plus the number of bits to represent the cluster indices and the model itself. Increasing $K$ reduces the average residual error, but increases the size of the indices and the model itself. By MDL, we choose $K$ so as to minimize the total size. MDL is a key component in filtering out noise from our complexity measure. In white noise images, where there is no meaningful or consistent pattern between different points, MDL finds only one cluster, because the small reduction in residual error from encoding more is not worth the extra cost, so the image ends up with a very low complexity score. We both prove this mathematically and observe it empirically.

There are two important similarities between the computational method presented here with human visual perception. The first is hierarchical processing. The visual cortex is divided into five areas, V1-V5. Each takes as input the integrated information output from the previous area, and has progressively larger receptive fields \citep{huff2021neuroanatomy}. This allows humans to perceive each element of a visual scene as composed of smaller elements, e.g., a photograph is composed of man, road, bicycle; the bicycle is composed of wheels, frame, saddle; the wheels are composed of spokes, tyre, valve etc. Similarly, our method processes progressively larger patches of features and passes the output of each level of the hierarchy to the level above as input. 
The approach of treating images as hierarchically structured underpins convolutional neural networks, and has also been leveraged for image segmentation \citep{passat2011interactive}, face recognition \citep{geng2011face}, and  image inpainting \citep{zhang2023fully}.

The second important similarity to human perception is the role of simplicity. Many authors have argued that human perception looks for the simplest interpretation of visual data \citep{chater2005minimum,feldman2016simplicity,sims2016rate}.
Similarly, our use of the minimum description length principle allows us to ignore certain parts of the image and group together other parts in a way that produces the most parsimonious representation overall. This is not a feature of CNNs, but there are some existing works that use MDL clustering for other image processing tasks, such as image segmentation \citep{yang2008unsupervised}, shape modelling \citep{davies2002minimum}, or key-frame extraction \citep{gibson2002visual}. As well being for different tasks, these methods differ from ours in that they do not exclude certain parts of the image as outliers, and are not hierarchical in the sense of passing the output from lower levels to higher levels as input.

The main contributions of this paper are briefly summarized below.
\begin{itemize}  
    \item We propose a novel theoretically sound measure of image complexity and discuss its relationship to ideas in algorithmic information theory.
    \item We test our method empirically on seven image datasets, four public and three synthetic datasets that we created. We show that our method performs as desired in distinguishing images from different datasets. In particular, our method is able to correctly assign a low complexity to white noise, in contrast to existing methods, which assign it a high complexity.
    \item We support these results theoretically by proving that, given normally distributed clusters, MDL will find just a single cluster when the clustering model is fit on white noise, and so our method will assign a low score.
    \item We conduct a further set of experiments, showing how our method can measure complexity at different scales in the image, how it performs when Gaussian noise is added to the image or the resolution of the image is reduced, and how it responds to an increasing fractal dimension of a fractal image. 
\end{itemize}
The rest of this paper is organized as follows. Section \ref{sec:related-work} gives an overview of related work. Section \ref{sec:method} describes our method, and Section \ref{sec:experimental-evaluation} presents our empirical evaluation. Finally, Section \ref{sec:conclusion} summarizes our findings and suggests directions for future work.

\section{Related Work} \label{sec:related-work}

\subsection{Measuring Image Complexity} \label{subsec:img-complexity}
\textbf{Fractal dimension} is a property of curves, which in some sense measures their complexity. 
It can be applied to an image by first binarizing with a threshold, then taking the boundary between white and black pixels as a curve and computing its Minkowski-Bouligand dimension. \citet{lam2002evaluation} explore the use of fractal dimension to measure the complexity of satellite images, and \citet{sun2006fractal} consider the application to remote sensing images more generally. Both also contain a detailed account of methods that use fractal dimension for image complexity. \citet{forsythe2011predicting} compare fractal dimension against human judgements of the complexity and beauty of visual art.

\textbf{File compression ratio} is the ratio between the size of a compressed file under a chosen compression algorithm, and the size of the uncompressed original. \citet{marin2013examining} measure image complexity using the file compression ratio, under two compression algorithms: GIF, which is lossy, and TIFF, which is lossless. The compression ratio was compared to human judgements of complexity, on the International Affective Picture System. It is also used as a complexity measure in \citet{forsythe2011predicting} and by \citet{machado2015computerized}. The former investigate the ability of JPEG-ratio, GIF-ratio, and a novel `perimeter detection' method to predict human judgements of complexity in visual art. The latter explore various combinations of compression algorithms with automated edge detection, and compares the results to human judgements of complexity. The authors find the best results using Sobel and Canny filters, followed by JPEG compression. 

\citet{carballal2020comparison} test the accuracy of various supervised machine learning models of complexity by annotating art and non-art images with human judgements of complexity, then regressing these annotations using a machine learning algorithm that includes feature selection. This was repeated a number of times, and the accuracy of a given feature was taken to be the fraction of times it was selected by the feature selection algorithm.

An alternative method is to use the \textbf{gradient of pixel intensities} across the image. This is the approach taken by \citet{redies2012phog}. The gradient is computed separately for each of the RGB channels, and the gradient at a pixel is taken to be the maximum across the three channels. The average gradient across the entire image is then taken as a measure of complexity. This is again applied to quantifying aesthetic judgements of visual art, this time as part of the Birkhoff-like measure \citep{birkhoff1933aesthetic}, which characterizes beauty as the ratio of order and complexity. 
A final method to consider is \textbf{the Fourier transform}, as used by \citet{khan2022leveraging}. The idea is that the more high-frequency components present in the power spectrum, the more complex the image. The authors investigate using both the mean and the median of the power spectrum, and find best results for the median.
The application in this case is guiding neural architecture search, the claim being that one should first measure the complexity of a given image dataset, and then use the result to inform architecture design.

\subsection{Relation to Other Tasks in Pattern Recognition}
Our method for measuring image complexity begins by assigning a cluster label to each pixel. It can therefore be interpreted as producing a \textbf{segmentation} of the image, by defining a segment as a contiguous set of pixels with the same cluster label. There are several common approaches to image segmentation, such as modified graph-cutting algorithms \citep{peng2011image} or component trees \citep{passat2011interactive}. Among these, the segmentation provided by our method falls into the category that uses only colour and texture information \citep{ilea2011image}, and also relates closely to those methods that use the minimum description length principle \citep{galland2005multi}. We do not directly explore the segmentation quality of our method, but Figure \ref{fig:golf-balls} gives a visual indication of the segments produced.

\textbf{Clustering} is a fundamental task in pattern recognition and machine learning that learns the structure of data in a fully unsupervised way. Current research topics in clustering include the use of deep neural networks such as CNNs \citep{caron2018deep,mahon2021selective} or graph neural networks (GNNs) \citep{fang2023robust}, and exploring alternatives to the standard centroid-based clustering, e.g., density-based clustering \citep{mcinnes2017hdbscan,kumar2016fast}. Our method relates especially to work on reducing the need for hyperparameters such as cluster number \citep{sinaga2020unsupervised,hou2023towards}.

\textbf{Compression} is of strong theoretical and practical interest to pattern recognition, and has been used specifically to measure data complexity by the works described in Section \ref{subsec:img-complexity}. Aside from standard algorithms such as JPEG, common approaches to image compression include deep learning \citep{uchigasaki2023deep,mishra2022deep} and variants of the wavelet transform \citep{haddad2013wave}. By combining clustering with MDL, we treat clustering as a form of compression (see~\cite{mahon2022discrete} for a discussion of clustering as compression) and thus illustrate the connection between compression and data complexity.

\section{Method} \label{sec:method}
This section gives an overview of the minimum description length principle as it is used in our method, then describes our method in detail with the aid of a worked example, and compares our approach, on a high level, to existing theoretical work on meaningful data complexity.

\subsection{Minimum Description Length Patch Clustering}
Our measure of complexity uses a form of clustering based on description length (DL), i.e., the number of bits needed to specify the given data. Description length is relative to an encoding scheme, and via the Kraft-MacMillan inequality, this corresponds to a probability distribution. Specifically, the Kraft-MacMillan inequality says that, under the optimal encoding scheme (optimal in the sense of being shortest on average) of a probability distribution $p(\cdot)$, the description length of a point $x$ is $-\log p(x)$. We model the probability distribution with a Gaussian mixture model (GMM), because (a) we seek a distribution-based clustering model, and a GMM is by far the most commonly used distribution-based clustering model, (b) choosing a GMM is equivalent to simply modelling the distribution within each cluster as normal, and this has theoretical justifications in the central limit theorem and maximization of differential entropy \citep{thomas2006elements}. The description length is therefore relative to the means $\mu = (\mu_i)_{1 \leq i \leq K}$ and the covariances $\Sigma = (\Sigma_i)_{1 \leq i \leq K}$ of this GMM. The probability of a point $x$ under its assigned component of the mixture model $(\mu,\Sigma)$ is given~by
\begin{align} \label{eq:MVN-prob}
    p(x, \mu, \Sigma) = \max_{1 \leq k \leq K} \frac{\exp(-\tfrac{1}{2}(x- \mu_{k})\Sigma_{k}^{-1}(x-\mu_{k}))}{\sqrt{(2 \pi)^{d}|\Sigma_{k}|}}\,,
\end{align}
where $\mu_k$ and $\sigma_k$ are, respectively, the mean and covariance of the $k$th component, and $d$ is the dimensionality of the data. Specifying $x$ under $p$ requires first indexing the cluster to which $x$ belongs and then encoding $x$ under the probability distribution of that cluster, which we refer to as the residual error. The latter was just shown to take $-\log p(x, \mu, \Sigma)$ bits. Similarly, the length of the former depends on the encoding scheme for, and equivalently the probability distribution over, the indices $1, \dots, K$, which can be taken empirically from the data. Specifically, the length of encoding which cluster $x$ belongs to is $-\log {n_k}/{N}$, so the total description length is then 
\begin{align} \label{eq:MVN-DL}
    \min_{1 \leq k \leq K} -\log \tfrac{n_k}{N} + \tfrac{1}{2}(x- \mu_{k})\Sigma_{k}^{-1}(x-\mu_{k}) + \tfrac{1}{2}\log{(2 \pi)^{d}|\Sigma_{k}|}\,,
\end{align}
where $k$ is the index of the cluster that it belongs to, $n_k$ is the number of points belonging to cluster $k$, and $N$ is the total number of data points. As discussed in Section \ref{subsec:comparison-to-theoretical}, we can conceive of the $-\log {n_k}/{N}$ term as the meaningful portion of this description and the remainder as the meaningless portion.

\subsubsection{Differential Description Length}
Because the multivariate normal distributions composing the GMM are continuous probability density functions (pdf), instead of probability mass functions as in the discrete case, it is possible that $p(x, \mu, \Sigma) > 1$. 
Note that this is always a possbility for pdfs, e.g., the univariate Normal distribution 
\[
\mathcal{N}(\mu,\tfrac{1}{5\sqrt{2\pi}})
\]
has the value $5$ at $x=\mu$.
In these cases, the Kraft-MacMillan inequality would seem to suggest that the corresponding encoding scheme can represent $x$ with a strictly negative number of bits, which of course is not possible. The apparent contradiction is resolved by making explicit the precision with which $x$ is to be encoded. Completely specifying any real number is not possible with a finite number of bits, instead one can only specify an extended region $D_x \subset \mathbb{R}^n$, which contains $x$. The number of required bits is then determined by the probability mass inside $D_x$, which is given by 
\begin{gather} \label{eq:discretized-pmf}
    p_m(D_x, \mu, \Sigma) = \int_{D_x} p(z, \mu, \Sigma) dz\,.
\end{gather}
Let $\epsilon$ be the coordinate-wise precision for specifying $x$, i.e., set $D_x$ to be a hypercube of side-length~$\epsilon$. The probability mass in $D_x$ is then approximated as $p(x, \mu, \Sigma) \epsilon^{d}$, giving the description length
\begin{equation} \label{eq:pmf-dl}
-d \log \epsilon -\log(p(x, \mu, \Sigma) - \log n_k/N \,.
\end{equation}
The additional term $-d\log{\epsilon}$ will be higher for smaller $\epsilon$, and will always increase the total description length to be positive even if $-\log(p(x, \mu, \Sigma) < 0$. That it will be large enough to counterbalance $-\log(p(x, \mu, \Sigma)$ is clear from observing that the probability mass in \eqref{eq:discretized-pmf} is never greater than $1$. 

Note that the additional $-d\log{\epsilon}$ term is independent of the pdf itself. Thus, it can be ignored when using MDL and comparing different pdfs (which correspond to different fit clustering models). That is, when invoking the MDL principle, it is sufficient to look only at the term remaining after the $-d\log{\epsilon}$ term has been removed:
\begin{equation} \label{eq:MVN-DDL}
    -\log(p(x, \mu, \Sigma) - \log n_k/N \,.
\end{equation}
We refer to this remaining quantity as the differential description length. We define the differential description length (DDL) to be the negative logarithm of the probability density. It is the continuous analogue of the description length, just as differential entropy is the continuous analogue of entropy. Similarly to differential entropy, DDL can be negative. This happens precisely when the probability density is greater than 1, as just discussed. DDL is related to the description length as follows: for a point $x$ with DDL $D$, the number of bits required to specify it to a precision $\epsilon$ is $\max(\{0,-D - d\log{\epsilon\}})$. 
The max is required to account for the case where the region specified by the precision $\epsilon$ is larger than the interval in which we already know $x$ to lie. For example, if we assume a priori that $x$ is uniformly distributed on $[0,1]$, in which case all points have DDL $0$ under the prior distribution, and then we try to specify to precision $2$, we will end up with 
\[
-D - d\log{\epsilon} = 0 - \log{2} = -1 \,.
\]
Taking the maximum with zero means that, in such cases, we obtain the correct result of $0$.

\subsubsection{Determining Outliers}
As well as choosing the number of clusters (see Section \ref{subsec:determining-num-clusters}), 
we can use the minimum description length (MDL) principle to determine which points are outliers with respect to the given model. An outlier can be defined as one that takes more bits to specify under the model than it does to specify directly, independently of the model. We can always specify (up to finite precision $\epsilon$) any point directly using the same discretizing reasoning as above. First, restrict attention to some bounded region of $\mathbb{R}^n$, which is large enough so that we can assume that it will contain all values the data could have.\footnote{There are several reasonable choices for such a bounded set: the range of values that can be specified using a standard 32-bit float or the hyperrectangle whose sides are the coordinate-wise ranges across the dataset of patches. We find that the exact choice does not affect results. In our implementation, we choose the hypercube whose sides, in each dimension, run from the minimum to the maximum values across all dimensions in the dataset.} Once this bounded region is specified, partition it into a set of small regions--hypercubes with side-length $\epsilon$--and then specify a point $x$ by indexing the unique region that contains $x$. The number of possible regions is 
\[
\left(\frac{a_{max}-a_{min}}{\epsilon}\right)^d\,,
\]
where $d$ is the dimensionality of the data, and $a_{max}$ and $a_{min}$ are the maximum and minimum values, respectively, that appear anywhere in the image. The number of bits to specify a point directly is then 
\begin{align} \label{eq:indexing-epsilon-hypercube}
    \log \left(\frac{a_{max}-a_{min}}{\epsilon}\right)^d = -d \log \epsilon + d \log (a_{max}-a_{min})\,.
\end{align}
Again, we can ignore the precision value $\epsilon$, because it will appear equally in both description length under the model and the description length from indexing the hypercube. Instead, we can use the differential description length. The indexing of the $\epsilon$ hypercube in \eqref{eq:indexing-epsilon-hypercube} is equivalent, when using the differential description length, to using a uniform prior on $[a_{min},a_{max}]^d$. Under such a distribution, the DDL of any point is $d \log (a_{max}-a_{min})$.
Comparing to the DDL under the model, as in \eqref{eq:MVN-DDL}, a point is an outlier iff
\begin{gather}
   -\log(p(x, \mu, \Sigma) - \log \frac{n_k}{N} > d \log (a_{max}-a_{min}) \iff \\
    \iff p(x, \mu, \Sigma)\frac{n_k}{K} < (a_{max}-a_{min})^{-d} \,,
\end{gather}
where, as above, $n_k$ is the number of points assigned to the same cluster as $x$. We  can then define the total DDL of $x$, where $x$ can be specified either directly or using the encoding scheme from the model, as 
\begin{equation} \label{eq:d-definition}
    D(x, \mu, \Sigma) = \min \left(d \log (a_{max}-a_{min}), -\log(p(x, \mu, \Sigma) - \log \frac{n_k}{N} \right)\,.
\end{equation}

\subsubsection{Determining the Number of Clusters} \label{subsec:determining-num-clusters}
For a given set of independent points, $X=(x_i)_{1 \leq i \leq N}$, we have
\begin{equation}
    -\log{p(X)} = -\log{\prod_{i=1}^N p(x_i)} -\sum_{i=1}^N \log{p(x_i)} \,,   
\end{equation}
so the description length of the entire set is the sum of the description lengths of all its points, and the same for the DDL.
The description length of $X$ under the GMM depends on the number of clusters in the GMM, and using the MDL principle, we can determine the optimum number of clusters by regarding `optimum' as meaning `produces the smallest DDL'. 

Let $\mu(X,K), \Sigma(X,K)$ denote the values of $\mu$ and $\Sigma$ with $K$ components, which maximize the probability of $X$:
\begin{gather}
    \mu(X,K), \sigma(X,K) = \argmax_{\mu, \Sigma} \prod_{x \in X} p(x, \mu, \Sigma)\,.
\end{gather}
Finding these optimal parameters means fitting the GMM to the dataset $X$, and can be performed with the usual expectation-maximization algorithm. Denote by $D(X,K)$ the DDL of $X$ under the optimal encoding corresponding to this fit GMM. 
Using $D(\cdot)$ from \eqref{eq:d-definition}, 
we have
\begin{align} \label{eq:dset-dl}
    D(X,K) = \sum_{x \in P(X)} d(x, \mu(X,K), \sigma(X,K))\,.
\end{align}
The value of $D(X,K)$ is the description length of the model itself plus the DDL of $X$ under the model. The former, i.e., the description length of a GMM with $K$ parameters, is, for precision $\epsilon$, given by
\begin{equation} \label{eq:DL-by-K}
    D(K) = Kd\log\left(a_{max}-a_{min}\right)\ + Kd^2\log\left(a_{max}-a_{min}\right)\,.
\end{equation}
Then, the optimal number of clusters $K^*$ is that which minimizes the total description length:
\begin{equation} \label{eq:k-star}
    K^* = \argmin_{1\leq K \leq |X|} D(X,K) + D(K)\,.
\end{equation}
Note that one only needs to consider values of $K$ up to the size of the dataset, as adding more clusters beyond that point can only increase the total description length. In practice, we test only values up to $8$, as fitting GMMs with many clusters becomes expensive and, in our experiments, does not change results. 

\begin{theorem} \label{main-theorem}
When clustering white noise in $[0,1]^m$, using a GMM with $k$ components, the expected DDL of a point is a monotonically increasing function of $k$.
\end{theorem}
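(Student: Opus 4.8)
The plan is to work in the large-sample limit, where the empirical distribution of the white noise is exactly uniform on $[0,1]^m$, so that the expected DDL of a point becomes an integral against the uniform density. Writing $\bar D(k)$ for this expectation and using the per-point DDL from \eqref{eq:MVN-DDL}, I would first split it into an assignment term and a residual term,
\begin{equation}
\bar D(k) = \underbrace{E\!\left[-\log \tfrac{n_{k^*}}{N}\right]}_{\text{index}} + \underbrace{E\!\left[-\log p(x,\mu,\Sigma)\right]}_{\text{residual}},
\end{equation}
where $k^*$ is the component assigned to $x$ under the minimisation in \eqref{eq:MVN-DL}. The index term is exactly the entropy $H(\pi_1,\dots,\pi_k)$ of the cluster proportions $\pi_j=n_j/N$, since $P(k^*=j)=\pi_j$, and the residual term is the mass-weighted cross-entropy of the locally uniform data against each fitted Gaussian.

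Next I would exploit the symmetry of the uniform distribution. The description-length-minimising fit partitions the cube into $k$ cells $R_1,\dots,R_k$ on which a given component attains the minimum, and by symmetry the optimal partition has equal masses $\pi_j=1/k$, so the index term equals $\log k$ and is strictly increasing. For the residual I would use the scaling identity that, for the uniform law on a region $R$, the maximum-likelihood covariance satisfies $|\Sigma_R| = s(R)\,\mathrm{vol}(R)^2$, where $s(R)$ is a scale-invariant shape factor, together with the trace identity $E[(x-\mu_j)^\top\Sigma_j^{-1}(x-\mu_j)\mid R_j]=m$. Substituting these and $\mathrm{vol}(R_j)=\pi_j=1/k$ into the Gaussian coding cost $\tfrac{m}{2}+\tfrac12\log\!\big((2\pi)^m|\Sigma_j|\big)$ collapses the residual to a constant, plus a $-\log k$ contribution from the shrinking covariance of the smaller cells, plus the mass-weighted average $\tfrac12\sum_j\pi_j\log s(R_j)$.

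This is where the main obstacle lies. The $-\log k$ from the shrinking covariances exactly cancels the $+\log k$ of the index term, so the monotonic behaviour of $\bar D(k)$ is governed entirely by how the average shape factor $\sum_j\pi_j\log s(R_j)$ evolves with $k$. The hard part is therefore a geometric, isoperimetric-type statement: that the cube cannot be partitioned into $k$ equal-mass cells whose average shape factor falls fast enough to beat the assignment cost. I would attack this by lower-bounding each cell's covariance determinant in terms of its volume via a John-ellipsoid / reverse-isoperimetric argument, thereby pinning $s(R_j)$ above a shape-factor floor that the optimal cells cannot cross. I expect this control of the covariance re-estimation—rather than the bookkeeping of the two entropy terms—to be the crux, and I would expect the strict increase to come either from the residual's inability to shrink past this floor or, should the infinite-sample terms cancel exactly, from the finite-sample model description cost $D(K)$ of \eqref{eq:DL-by-K}, which is itself strictly increasing in $k$.
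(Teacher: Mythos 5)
There is a genuine gap, and it is exactly at the point you flagged as the crux. Your decomposition into an index term ($\log k$ by symmetry) and a residual term, and your discovery that the shrinking covariances contribute $-\log k$ and cancel the index term exactly, reproduce the paper's computation --- but only in one of its two regimes. The quantity in the theorem is not the uncapped cost $-\log p(x,\mu,\Sigma) - \log(n_k/N)$ of \eqref{eq:MVN-DDL}; it is the capped quantity of \eqref{eq:d-definition}, the minimum of the model cost and the cost of coding the point directly under the uniform prior, which is $0$ on $[0,1]^m$. The paper's proof turns entirely on this capping. It models the $k$ clusters as packed congruent balls of radius $r \propto k^{-1/m}$ (Lemma \ref{lemma:r}), deduces $\Sigma = \sigma I$ with $\sigma \propto k^{-2/m}$ (Lemma \ref{lemma:variance}), shows a point is an outlier iff its distance from its centroid exceeds the inlier radius $d=\sqrt{\sigma \ln \tfrac{1}{2\pi|\Sigma|}}$ (Lemma \ref{lemma:d}), and proves outliers exist precisely when $k$ is below an explicit threshold (Lemma \ref{lemma:saturating-k}). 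Below that threshold, the expected residual is $E[\min(0,f(x))]$ rather than $E[f(x)]$; the truncation breaks your exact cancellation, and a direct computation (Lemma \ref{lemma:increasing-when-outliers}) shows the total is strictly increasing in $k$. Above the threshold no point is truncated, your cancellation applies verbatim, and the expected DDL is constant (Lemma \ref{lemma:constant-when-no-outliers}). The theorem is the concatenation of these two regimes.

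Neither of your proposed rescues can replace this mechanism. In the symmetric model the cells are congruent, so the shape factor $s(R_j)$ is the same for every cell and (to the accuracy of the approximation) independent of $k$: an isoperimetric or John-ellipsoid floor on $s$ can at best rule out a \emph{decrease} of the objective, yielding constancy for all $k$, never the strict increase at small $k$ that constitutes the actual content of the theorem (it is what forces MDL to prefer $k=1$ on white noise). Your second fallback, invoking the model cost $D(K)$ of \eqref{eq:DL-by-K}, proves a different statement: the theorem concerns the expected per-point DDL alone, and $D(K)$ enters separately in the MDL selection \eqref{eq:k-star}, so appealing to it changes the claim from a property of the data-coding cost to a property of the full MDL objective. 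The missing idea, in short, is that outlier coding under the uniform prior caps the per-point cost at zero, and that this cap binds exactly when $k$ is small.
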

See the appendix for a proof. This means that, for white noise, we should expect MDL to select the model with just a single cluster, in which case every point will receive the same cluster label and the resulting entropy will be zero.

Determining the outliers and the number of clusters is relevant to measuring complexity, because it will affect the cluster model that is learnt, and so affect the cluster labels that are assigned and, ultimately, our complexity score.

\subsection{Hierarchical Patch Entropy} \label{subsec:patch-entropy}

\begin{figure*}[t]
    \centering
    \includegraphics[width=0.9\textwidth]{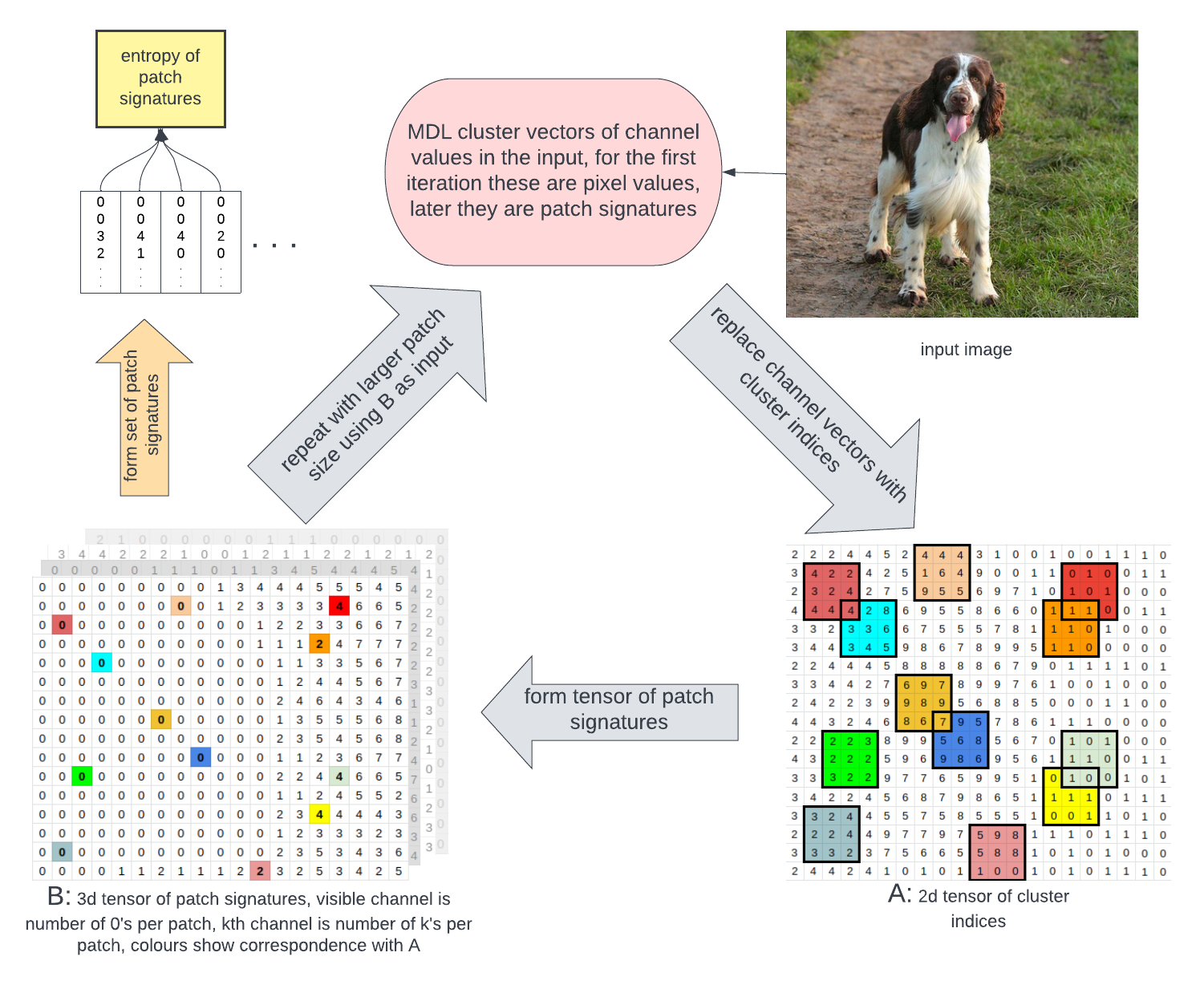}
    \caption{ \small  \small Method for computing the entropy of patch signatures as a measure of complexity. Each patch signature is the multiset of MDL cluster indices that appear there.}
    \label{fig:method-description}
\end{figure*}

The method described in this section is depicted graphically in Figure~\ref{fig:method-description}. At each level of the hierarchy, we begin with a 3d tensor $X$ of shape $(H,W,C)$ and will cluster the vectors of the last dimension; on the first level, this means clustering 3d vectors specifying the colour intensities for each of the three colour channels at each point. Before clustering, the model computes $K^*$ as in \eqref{eq:k-star}, then clusters the last-dimension vectors of $X$ using a mixture model with $K^*$ components. From this clustering, we can form the 2d tensor $A$, of shape $(H,W)$ whose $(i,j)$th entry is the cluster index of the $(i,j)$th pixel in $X$, and $B$, the 3d tensor of shape $(H-m+1,W-m+1,K^*)$ whose $(i,j,k)$th entry is the count of how many times the $k$th cluster appears in the $m \times m$ patch beginning at $(i,j)$ in $A$. 

The patch size $m$ is a user-set parameter. We refer to the vector at location $i,j$ in $B$ as the \emph{signature} of the $(i,j)$th patch. Our measure of entropy at this level is the entropy of the categorical distribution of all signatures that appear in $B$.

As an example of how a patch signature in $B$ is formed from the corresponding patch of cluster indices in $A$, consider the top-left coloured patch in $A$, at the bottom-right of Figure \ref{fig:method-description}. This patch, coloured in dark red, contains three copies of index $2$, one copy of index $3$, five copies of index $4$, and no copies of any other index. Thus, the patch signature is the vector $[0,0,3,1,5,0,0,0,0,0]$. At the bottom-left of Figure \ref{fig:method-description} we see this patch signature is then stored at the corresponding location in $B$,  also in dark red; note the first channel showing $0$, the first element in the patch signature.

To measure complexity at a larger scale, we repeat the above procedure, this time beginning with $B$ instead of $X$. Let subscripts denote the level of the hierarchy, so that $A_i$ and $B_i$ are the tensors formed, as just described, on the $i$th level of the hierarchy. Then, we can say that $B_i$ contains the signatures (i.e., counts vectors) of the patches in $A_i$, and $A_i$ contains the MDL-cluster indices of the last-dimension vectors in $B_{i-1}$. To begin the iteration, $B_0$ is set to $X$, the input image. 

The present implementation computes up to $B_4$, and uses larger patch sizes for each level: $4$, $8$, $16$, and $32$. Note, however, that this is not the same as simply clustering larger patches of an image. What is clustered at each level is the cluster indices from the level below, so is quite different from the input image. The full method is described in Algorithm~\ref{alg:method}. 


\begin{algorithm}
\caption{ \small  \small Algorithm for computing the complexity of an image.} \label{alg:method}
\begin{algorithmic}
    \Function{MDL\_Cluster}{D}
    \State $best\_DL \gets \infty$
    \State $A \gets$ cluster indices of MDL of $D$, initialized randomly
    \For{$K \in \{1,\dots,K\_max\}$}
        \State fit a GMM with $K$ components to $D$
        \State $DL \gets$ differential description length of $D$ under this fit GMM, as per \eqref{eq:DL-by-K} 
        \If{DL < best\_DL}
            \State $A \gets$ cluster indices of $D$ under this fit GMM
            \State $best\_DL \gets DL$
        \EndIf
    \EndFor
    \State \Return $A$
    \EndFunction

    \Function{Signatures\_Entropy}{S}
        \State $bin\_counts \gets$ hash table whose keys are the unique elements in $S$, and whose values are the number of times that element occurs in $S$
        \vspace{3pt}
        \State \Return -$\sum_{b \in bin\_counts} \frac{bin\_counts[x]}{|S|}\log \frac{bin\_counts[x]}{|S|}$
    \EndFunction
        
    \Function{Compute\_Patch\_Signatures}{X,m}
        \State $A \gets$ \Call{MDL\_Cluster}{$X$}
        \State $B \gets$ multisets of cluster indices appearing in all $m \times m$ patches of $A$ (including overlapping)
        \State \Return $B$
    \EndFunction
        
    \Function{Complexity}{X,scales}
        \State $total\_complexity \gets 0$
        \For{$m \in scales$}
            \State $X \gets$ \Call{Compute\_Patch\_Signatures}{$X,m$} 
            \State $total\_complexity \gets total\_complexity + $ \Call{Signatures\_Entropy}{$X$}
        \EndFor
        \State \Return $total\_complexity$
    \EndFunction

\end{algorithmic}
\end{algorithm}

The method begins with the function MDL\_Cluster, which returns the cluster indices of the MDL clustering of each location in the input. The right-hand-side of Figure \ref{fig:golf-balls} shows an example of the output of this function when applied to the image from the left-hand-side of Figure \ref{fig:golf-balls}.

\subsection{Comparison with Theoretical Measures of Meaningful Complexity} \label{subsec:comparison-to-theoretical}
As mentioned in Section \ref{sec:introduction}, previous works have explored, theoretically, how one might divide the algorithmic information of an object into a meaningful portion and a meaningless portion via sophistication \citep{vitanyi2006meaningful,koppel1987complexity} and effective complexity \citep{ay2010effective,gell1996information}. The applied method that we present in this paper shares the same high-level approach to these theoretical ideas, namely, to select the description for our data that has shortest overall length, and then, within that shortest description, select the size of the meaningful portion as a measure of the data complexity. 

We assume that we have some way of distinguishing meaningful vs. meaningless descriptions. In our case, meaningful descriptions correspond to assignments of cluster labels to different parts of the image, and have length given by the first term in \eqref{eq:MVN-DL}; the meaningless descriptions correspond to the residual error in specifying a point exactly given its cluster label, as per the second two terms in \eqref{eq:MVN-DL} along with the specification of outliers as per \eqref{eq:indexing-epsilon-hypercube}. Sophistication and effective complexity, on the other hand, characterize the meaningful portion as a description of a set of which the given data is a typical member, and the meaningless portion corresponds to selecting the given data from within that set. Let $\mathcal{S}$ and $\mathcal{R}$ denote, respectively, the sets of all possible meaningful and meaningless descriptions. Given data $X$, we write 
\begin{equation}
 D_0, \dots, D_n \vdash X,\, \text{ where } D_i \in \mathcal{S} \cup \mathcal{R}\,, \forall 1 \leq i \leq n
\end{equation}
to mean that descriptions $D_0, \dots, D_n$ together perfectly describe $X$. We might try to characterize the meaningful complexity in $X$ as the length of its shortest meaningful description:
\begin{equation}
    \min_{S \in \mathcal{S}} \{l(S) | S \vdash X\}\,,
\end{equation}
where $l(\cdot)$ denotes the length of a description. However, this naive approach returns us to the problem of measuring random noise as highly complex, because if we are restricted only to meaningful descriptions, then we would need a very long one to completely describe a piece of noise. Instead, the approach taken both by our work, and by sophistication and effective complexity on the theoretical side, is to make use of the non-meaningful portion, not to count directly towards the complexity score, but in selecting the shortest description. The amount of meaningful complexity in $X$ is measured as
\begin{equation}
    l(D^*), \text{  where  } (D^*,E^*) = \min_{(D,E) \in \mathcal{S} \times \mathcal{R}} \{l(D)+l(E) | D,E \vdash X\}\,.
\end{equation}
This leads to random noise getting a high value of $l(R)$, but a low value of $l(S)$, so even though its overall description length, $l(S) + l(R)$, might be high, the resulting complexity score is low. In our case in particular, as shown by Theorem \ref{main-theorem}, the total description length tends to be minimized by having a single cluster, which means that the meaningful description is essentially of zero length and the entirety of the data is specified directly as outliers via \eqref{eq:indexing-epsilon-hypercube}.

There are important differences between our method and these theoretical works as well: in order to capture local spatial information, we measure the entropy of cluster labels \emph{within patches}, not of individual points; and we repeat our method recursively at different levels, to capture compositionality, as described in Section~\ref{sec:introduction}. However, to the problem of correctly measuring the complexity of noise, our method uses, on a high level, the same solution as that explored in the theoretical concepts of sophistication and effective complexity.

\subsection{Worked Example} \label{subsec:worked-example}
This section contains a worked example on a randomly chosen image from ImageNet, shown in Figure~\ref{fig:golf-balls}. The steps of our method are enumerated for each of the four levels of the hierarchy. This shows how the final complexity score is obtained. At each level $i$, the model 
\begin{enumerate}
    \item performs MDL clustering on the set of array elements $B_{i-1}$, and assigns each a cluster label, to form $A_i$ (initially, $B_0$ is an image array of pixels, and then $A_1$ contains a cluster label for each pixel in $B_0$)
    \item forms $B_i$ out of patch signatures of multisets of labels in each patch of $A_i$ 
\end{enumerate}

\begin{figure*}
    \centering
    \begin{tabular}{@{}c@{}c@{}}
    \includegraphics[width=0.45\textwidth]{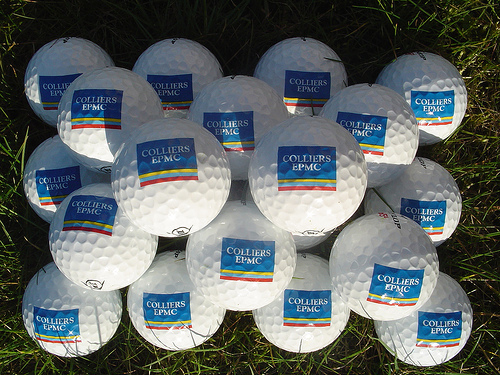} & 
    \hspace{1em}
    \includegraphics[width=0.454\textwidth]{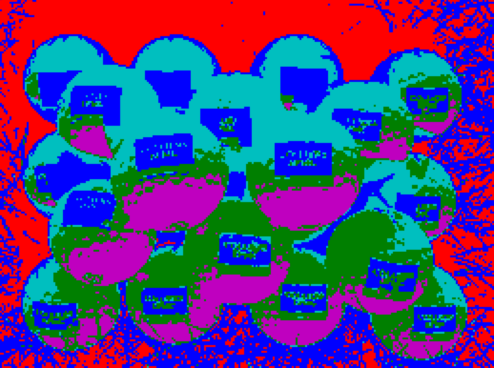}
    \end{tabular}
    \caption{ \small Left: Example of a relatively high-resolution real-world image from ImageNet. ID: n03445777\_10762.
    Right: The matrix $A$ formed by MDL clustering each point in the input image, i.e., by applying the function MDL\_Cluster from Algorithm \ref{alg:method}; different cluster indices are shown in different colours.}
    \label{fig:golf-balls}
\end{figure*}
\noindent  \textbf{Layer 1}: 50246 points to cluster (pixels)  \\
Number of components found by MDL, as per \eqref{eq:k-star}: 7 \\
Assign each pixel a label from $0, \dots, 6$, and form patch signatures as multisets of labels inside all $4 \times 4$ patches, which gives 48450 patches, of 1411 different unique values. \\
Entropy of resulting categorical distribution of patch signatures: \textbf{7.995} \\

\noindent  \textbf{Layer 2}: 48450 points to cluster (pixels)  \\
Number of components found by MDL, as per \eqref{eq:k-star}: 8 \\
Assign each point a label from $0, \dots, 7$, and form patch signatures as multisets of labels inside all $8 \times 8$ patches, which gives 44954 patches, of 3677 different unique values. \\
Entropy of resulting categorical distribution of patch signatures: \textbf{10.194} \\

\noindent  \textbf{Layer 3}: 44954 points to cluster (pixels)  \\
Number of components found by MDL, as per \eqref{eq:k-star}: 8 \\
Assign each point a label from $0, \dots, 7$, and form patch signatures as multisets of labels inside all $16 \times 16$ patches, which gives 38346 patches, of 7341 different unique values. \\
Entropy of resulting categorical distribution of patch signatures: \textbf{12.772} \\

\noindent  \textbf{Layer 4}: 38346 points to cluster (pixels)  \\
Number of components found by MDL, as per \eqref{eq:k-star}: 7 \\
Assign each point a label from $0, \dots, 6$, and form patch signatures as multisets of labels inside all size $32 \times 32$ patches, which gives 26666 patches, of 5666 different unique values. \\
Entropy of resulting categorical distribution of patch signatures: \textbf{12.353} \\
\\
Total complexity: $7.995 + 10.194 + 12.772 + 12.753 = \boldsymbol{43.314}$
\\

\section{Experimental Evaluation} \label{sec:experimental-evaluation}
It is difficult to assess the performance of an image complexity measure empirically. Some works gather human subjective judgements on a particular distribution of images (e.g., European renaissance paintings) and report accuracy/correlation, often also training a supervised model on these human judgements~\citep{machado2015computerized,nagle2020predicting}. Aside from the practical difficulties of running these psychological studies, evaluating a model on a single distribution does not give a rounded indication of its accuracy, it is unclear how such models will perform when presented with a more diverse set of images. Additionally, collecting human judgments of complexity in this way may not be reliable: they are influenced by the presentation of the image as well as cognitive factors such as visual working memory \citep{sherman2013visual}, and show high inter-subject variability \citep{madrid2019human}. There is also EEG evidence suggesting that humans use different cognitive processes to judge an image's complexity depending on its degree of naturalness/familiarity \citep{nicolae2020preparatory}. We instead evaluate this method with a number of different experiments that, together, show that it assigns complexity scores in a coherent and consistent way, and that it accords with our intuitive understanding of complexity. 

Firstly, we present the scores produced by our method for a diverse set of images of different types, taken from different datasets, both public and synthetic datasets that we create, and compare these scores to those produced by existing complexity metrics. Comparing sets/types of images, rather than individual images, has the advantage of reducing subjectivity. One can say with reasonable objectivity that ImageNet images are more complex than MNIST images, whereas  there is more subjectivity in trying to compare the complexity of two different Renaissance paintings, or even two different ImageNet images. The scores produced by our method match our intuitive notion of complexity on this diverse set of images much more closely than do the scores of existing complexity metrics.

Then, after presenting ablation studies, we investigate the distribution of complexity across different levels of the hierarchy, and show that these agree with the different scales of complexity in the different types of images, e.g., fine-detailed repetitive textures receive high scores on the low levels of the hierarchy but lower scores on the higher levels, compared to globally structured images such as natural scenes from ImageNet.

Next, we show the effect of adding Gaussian noise and of lowering the resolution of images. A small amount of noise or reduction in resolution does not change the content of the image and so should not have a significant effect on the complexity score. For larger reductions in image quality, we would expect a gradual decline in complexity as the information in the image becomes increasingly obscured. This is exactly the case for our method. Its scores are largely unchanged by small quality degradations (addition of noise or reduction in resolution), and then show a steady decline with increasing degradation. As our method so effectively assigns low complexity to white noise images, it is particularly notable that it remains robust to a small/moderate amount of Gaussian noise.

Finally, we present the scores produced by our method on a fractal image, as the fractal dimension is varied. Again, the results are in line with our intuition about the type of complexity expressed by fractal dimension: higher fractal dimensions get a higher complexity score, but this is largely concentrated on the lower, more local levels.

\subsection{Datasets}
We present the average score of our method on seven different sets of images, four popular image datasets and three synthetic datasets that we created: 
\begin{enumerate}
\item \textbf{ImageNet} is a dataset with high complexity, depicting real-world objects in context. 
\item \textbf{CIFAR} also shows real-world objects in context but of a much lower resolution, $32 \times 32$ vs.~approximately $224 \times 224$ for ImageNet. 
\item \textbf{MNIST} depicts low-resolution greyscale digits. Its images are simple in that they can be represented exactly with a small number of bits, but still have meaningful semantic content. 
\item \textbf{DTD2} is a dataset that we created by manually searching through the Describable Textures Dataset \parencite{cimpoi2014describing} for all images of fine-detailed repeating textures.
\item \textbf{Stripes} is a synthetic dataset that we created of greyscale images of stripes of varying thickness and orientation. The thickness of the lines, in pixels, is sampled uniformly at random from $[3,10]$, and the slope of the lines is sampled uniformly at random from $[-0.5, -1.5]$. It is sufficient to consider negative slopes only as our method, and all methods that we compare to, are invariant to reflections, so the striped images with slope in $[0.5, 1.5]$ would receive identical scores to those in $[-0.5, -1.5]$. Note that our method is not necessarily invariant to rotations, because it is based on square, axis-aligned patches of pixels. The same is true of the fractal dimension computed with the Minkowski-Bouligand dimension (i.e., the fractal dimension), as it uses a box-counting method. An example of an image from Stripes images is shown in Figure \ref{fig:HMDLC-synthetic-dsets-images}.
\\
\item \textbf{Halves} is a synthetic dataset that we created  of greyscale images of half-black and half-white. These images have one half entirely black and the other entirely white, with the dividing line at various angles. As with Stripes, the slope of this dividing line is sampled uniformly at random from $[-0.5, -1.5]$. An example of an image from Halves is shown in Figure \ref{fig:HMDLC-synthetic-dsets-images}.
\\
\item \textbf{Rand} is a synthetic dataset that we created of white noise images, i.e., images with independent random pixel values. Their values are sampled uniformly at random from $[0,1]$, independently for each location and each of three colour channels. Figure \ref{fig:HMDLC-synthetic-dsets-images} shows an example image.
\\
\end{enumerate}
\begin{figure}\[ht\]
    \centering
    \begin{tabular}{@{}c@{}c@{}c@{}}
    \includegraphics[width=0.33\textwidth]{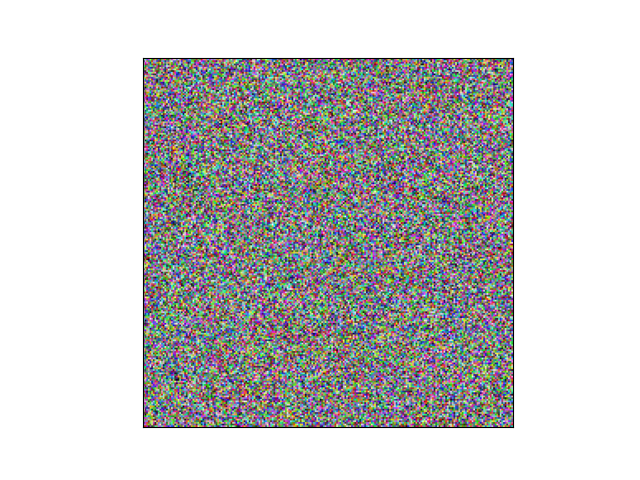} & 
    \includegraphics[width=0.33\textwidth]{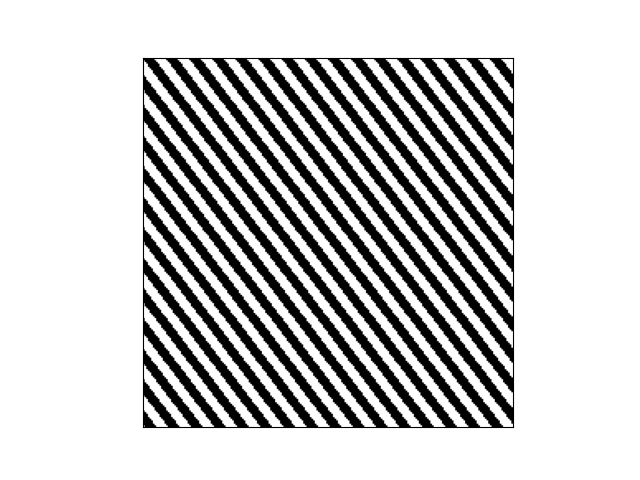} & 
    \includegraphics[width=0.33\textwidth]{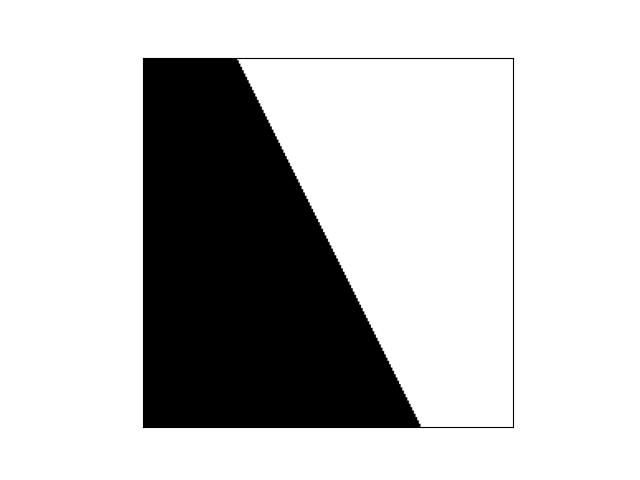}  
    \end{tabular}
    \caption{ \small Examples of images from the synthetic datasets we create. Left: Rand dataset; Middle: Stripes dataset; Right: Halves dataset.}
    \label{fig:HMDLC-synthetic-dsets-images}
\end{figure}
For DTD2, we find $341$ suitable images. For all other datasets, we use $1500$ randomly sampled images and report the average for each image type.  
All images are resized to $224 \times 224$. The GMMs used for clustering are initialized with k-means, use diagonal covariance matrices, have tolerance $1e-3$, and are capped at $100$ iterations.

\subsection{Comparison with Existing Methods} \label{subsec:main-results}
Table~\ref{tab:main-results} compares our method to seven others: `khan2021' \citep{khan2022leveraging}, `machado2015' \citep{machado2015computerized}, and `redies2012' \citep{redies2012phog} are as described in Section \ref{sec:related-work}; `entropy' converts the image to greyscale, discretizes the values into 256 bins, and then computes the Shannon entropy of the bin counts; `fractal dim.' converts the image to greyscale, then binarizes it to $0$ or $1$, and computes the fractal dimension of the resulting shape using the box-counting method; `jpg-ratio' measures the ratio of the JPEG-compressed file size to that of the original; and `GLCM' computes the average entropy of the grey-level co-occurrence matrix, at offsets $1$, $4$, $8$, $16$, and $32$ (see \citet{sebastian2012gray} for an account of GLCM in image complexity). All methods are normalized so their maximum score is 1.

The most striking result is that our method assigns zero complexity to white-noise images, while every other method assigns them high complexity, with many assigning maximum complexity. White noise images are not at all meaningful or interesting to humans, and it is a significant finding that our method is the first to reflect this. It suggests that, while existing methods are based only on the variation across the image, our method is able to measure the degree of \emph{meaningful} variation, i.e., it is able to distinguish signal from noise.

The only two existing methods not to measure white noise as maximally complex are `machado2015' and `redies2012', though they still give it a high score. Instead, they give their max score to Stripes. This is also undesirable, because the simple repeating black and white stripes are not intuitively complex or meaningful either. These methods are both based on gradients (see Section \ref{sec:related-work}), and the stripes produce a sharp gradient at every transition from black to white, which is likely the reason for these high scores. 
Stripes is also given a high score by the fractal dimension and JPEG-ratio methods, both assigning it only slightly less than white noise and significantly more than any other dataset, including ImageNet. 
The method of \citep{khan2022leveraging} (denoted `khan2021') is difficult to interpret at all, because it assigns such a high score to the white noise that, after normalizing, all other datasets end up close to zero, with three being equal to zero. Recall that this method takes the median of the Fourier transform coefficients, so equals zero if over half of the coefficients are zero. 
Perhaps surprisingly, the relatively simple methods of entropy and GLCM entropy do a reasonable job of distinguishing real-world images from synthetic images and MNIST, compared to the more bespoke methods. However, they cannot detect a significant difference between ImageNet, CIFAR, and DTD, assigning all three very similar scores. In contrast, our method agrees much more closely with the intuitive notion of complexity: it assigns the highest complexity to ImageNet; it puts CIFAR ahead of DTD2 even though the latter is of higher resolution and has a complex texture, which shows that it recognizes CIFAR to have more semantically meaningful content; and it assigns MNIST a reasonably high complexity, despite it being the smallest in terms of file size, again showing that it can recognize global structure. Even aside from the white noise, no method but ours correctly places the remaining six datasets in order of complexity (left-to-right, as they appear in Table~\ref{tab:main-results}). This highlights the superior ability of our method to capture meaningful complexity across a variety of image types. 

\begin{table}
\caption{Comparison of our method with existing methods. The figures for each dataset are the mean across all images from that dataset, with std.~dev. from batches of 25 in parentheses. All methods are normalized, so the maximum score that they assign is 1. Ours is the only method that does not assign white noise images high complexity, and gives the most reasonable results on all other datasets.}
\label{tab:main-results}
\centering
\resizebox{\textwidth}{!}{\begin{tabular}{*{8}{l}}
\toprule
 & \multicolumn{7}{c}{\textbf{Dataset}}\\
\cmidrule(r){2-8}\\
& \textbf{ImageNet} & \textbf{CIFAR} & \textbf{DTD2} & \textbf{MNIST} & \textbf{Stripes} & \textbf{Halves} & \textbf{white-noise}\\
\textbf{ours} & 0.80 (.10) & 0.74 (.06) & 0.62 (.29) & 0.50 (.08) & 0.36 (.11) & 0.26 (.01) & 0.00 (.00)\\
\textbf{khan2021} & 0.09 (.05) & 0.01 (.01) & 0.07 (.06) & 0.00 (.00) & 0.00 (.00) & 0.00 (.00) & 0.99 (.00)\\
\textbf{machado2015} & 0.23 (.08) & 0.15 (.02) & 0.38 (.08) & 0.21 (.01) & 0.53 (.02) & 0.06 (.00) & 0.87 (.00)\\
\textbf{redies2012} & 0.13 (.05) & 0.04 (.01) & 0.21 (.11) & 0.00 (.00) & 0.66 (.34) & 0.01 (.00) & 0.59 (.00)\\
\textbf{entropy} & 0.89 (.10) & 0.89 (.07) & 0.83 (.13) & 0.30 (.06) & 0.13 (.00) & 0.13 (.00) & 0.96 (.00)\\
\textbf{fractal dim.} & 0.74 (.09) & 0.61 (.08) & 0.86 (.16) & 0.45 (.06) & 0.98 (.02) & 0.44 (.02) & 1.00 (.00)\\
\textbf{jpg-ratio} & 0.22 (.08) & 0.09 (.0) & 0.29 (.09) & 0.06 (.01) & 0.57 (.01) & 0.06 (.00) & 0.57 (.00)\\
\textbf{GLCM} & 0.84 (.11) & 0.80 (.08) & 0.83 (.14) & 0.27 (.05) & 0.11 (.02) & 0.08 (.00) & 0.98 (.00)\\
\bottomrule
\end{tabular}}\vspace*{-2ex}
\end{table}

\subsection{Ablation Studies} \label{subsec:ablation-studies}

\begin{table}
\caption{Effect of removing two main components of our method. In `no mdl', clustering is performed without MDL, instead simply fixing the number of clusters to $5$ for all images and all scales. In `no patch', we compute the entropy of the clusters themselves rather than of the patch signatures.}
\label{tab:ablation-results}
\centering
\resizebox{\textwidth}{!}{\begin{tabular}{*{8}{l}}
\toprule
 & \multicolumn{7}{c}{\textbf{Dataset}}\\
\cmidrule(r){2-8}\\
& \textbf{ImageNet} & \textbf{CIFAR} & \textbf{DTD2} & \textbf{MNIST} & \textbf{Stripes} & \textbf{Halves} & \textbf{white-noise}\\
\textbf{main} & 0.80 (.10) & 0.74 (.06) & 0.62 (.29) & 0.50 (.08) & 0.36 (.11) & 0.26 (.01) & 0.00 (.00)\\
\textbf{no mdl} & 0.73 (.09) & 0.66 (.06) & 0.90 (.11) & 0.40 (.07) & 0.35 (.13) & 0.27 (.01) & 0.98 (.00)\\
\textbf{no patch} & 0.92 (.09) & 94 (.04) & 0.62 (.28) & 0.61 (.1) & 0.74 (.09) & 0.50 (.01) & 0.00 (.00)\\
\bottomrule
\end{tabular}}
\end{table}

Table~\ref{tab:ablation-results} shows the effect of removing two key components of our method. In `no mdl', rather than selecting the number of clusters $K$ using the minimum description length principle, we fix $K=5$ for all images. This results in the same problem that existing methods suffer from: white noise is mistaken for high complexity and receives the maximum score. Also, `no mdl' scores DTD2 too highly, showing that the method is not responding to global structure. In `no patch', we take the entropy not of patch signatures, but of individual points in the array, i.e., of $A$ rather than $B$ in the terminology of Section~\ref{subsec:patch-entropy}. (Patch signatures are still used for the iteration step.) This setting still performs reasonably well, but it gives too high a score to Stripes and a higher score to CIFAR than to ImageNet.

\subsection{Complexity at Different Scales} \label{subsec:complexity-at-scales}
The results from Section \ref{subsec:main-results} suggest that, unlike existing methods, which focus only on detailed textures, ours is able to recognize complexity at a global level. Figure~\ref{fig:complexity-by-scale} provides further support for this claim by showing the breakdown of our complexity measure at the four different scales (that is, four different levels of the hierarchy; see Section \ref{subsec:patch-entropy}). Smaller scales respond to local complexity, and as the process is iterated to larger scales, global structure can be detected.

\begin{figure*}[t]
    \centering
    \begin{tabular}{@{}c@{}c@{}}
    \includegraphics[width=0.53\textwidth]{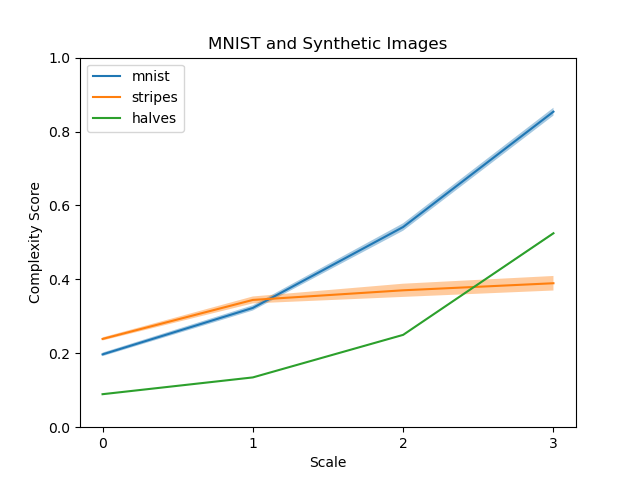} & 
    \includegraphics[width=0.53\textwidth]{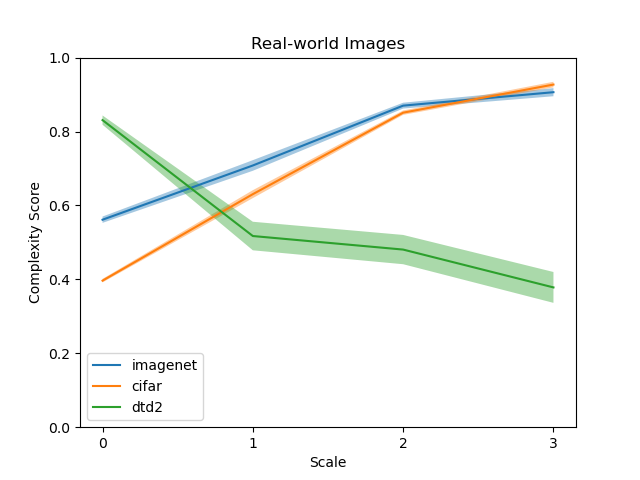} \\
    \end{tabular}
    \caption{ \small  \small Our complexity measure for different scales. The $x$-axis depicts patch size, on a log scale. Plots show mean score for all images of that type. Shaded regions are std dev from batches of 25 images.}
    \label{fig:complexity-by-scale}
\end{figure*}

The first plot shows MNIST and the synthetic images. While MNIST has a similar local complexity score to Stripes, it has a much higher global complexity score, indicating that the more meaningful global structure in MNIST images can be detected. Halves, which is almost uniform locally but shows some variation globally, is given a very low local complexity but a small amount of global complexity. The second plot compares real-world images. CIFAR has the lowest local complexity, because it is low resolution, because it was resized from $32 \times 32$, so neighbouring pixels are all similar, but this does not affect its global complexity, which is as high as that of ImageNet. DTD2, on the other hand, has the highest local complexity,  because it depicts detailed textures, but the lowest global complexity, because the textures are uniform across different regions of the images. 


\subsection{Effect of Adding Gaussian Noise}
As our method so consistently assigns zero complexity to white noise, one may wonder whether it just searches for randomness in the image, and assigns zero if it finds any. To check this, we progressively add Gaussian noise to the three real-world datasets. The results are shown in Figure~\ref{fig:add-gaussian-noise}. Noise is sampled independently from a standard normal distribution for each pixel, and a fraction of this noise is added to the image. Up until 10\%, the scores are largely unchanged (DTD drops slightly), and then the scores for all three datasets steadily decrease with further noise. If the method was simply assigning low complexity in response to any randomness in the image, then we would see a sharp decline as soon as a small amount of noise is added. The results suggest that the method is instead responding to the amount of meaningful content in the image. A gradual decline in complexity is precisely what we would expect as the image quality deteriorates.

\begin{figure}
    \centering
    \includegraphics[width=.75\textwidth]{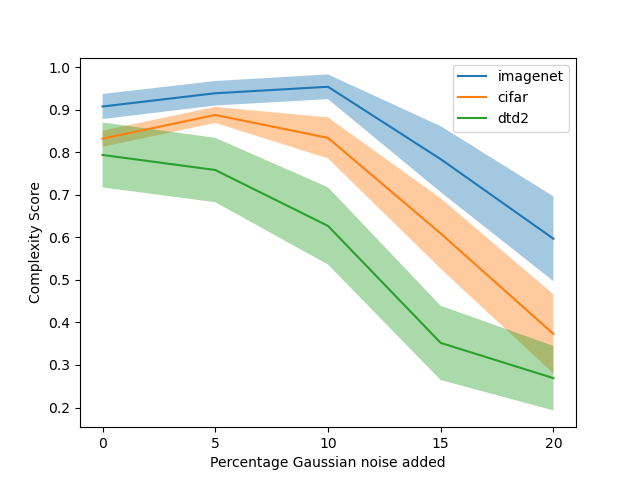}
    \caption{ \small  \small Our complexity measure with different amounts of Gaussian noise added. Shaded regions are std from batches of 25 images. That is, we randomly sample 25 images and compute the mean complexity score, then repeat this for a total of 300 images and report the (unbiased) sample std dev.}
    \label{fig:add-gaussian-noise}
\end{figure}

\subsection{Effect of Changing Resolution}
To investigate how much our method is affected by the resolution of the input image, we apply it to a downsampled ImageNet. We randomly select 300 of the 1500 ImageNet images used for our main experiment and convert them to resolution $32 \times 32$. Table~\ref{tab:downsample-results} shows the results of our method on these downsampled images and compares to the full-sized ImageNet images, which are roughly $256 \times 256$. There is a slight drop on the lower levels of the hierarchy, which corresponds to the greater uniformity at the local scale in the blurry, low-resolution images. The scores at the higher levels are essentially identical, and overall the scores are almost the same for the downsampled images as for the full-resolution images. This shows our method to be robust to changes in resolution, responding more to the contents of the image than to the resolution it is depicted at.

\begin{table}
\caption{ \small  \small Comparison, on the scores produced by our method, of downsampling ImageNet to $32 \times 32$. Taken from 300 randomly sampled images of the 1500 used for the main results in Table~\ref{tab:main-results}.}
\label{tab:downsample-results}
\centering
\begin{tabular}{*{6}{l}}
\toprule
& Level 1 & Level 2 & Level 3 & Level 4 & Total \\
\textbf{full resolution} & 7.70 (1.75) & 9.71 (1.99) & 11.93 (1.75) & 12.43 (1.98) & \textbf{41.77} (5.78) \\
\textbf{low resolution} & 5.60 (0.73) & 9.07 (1.43) & 11.92 (1.14) & 12.72 (0.58) & \textbf{39.03} (3.40) \\
\bottomrule
\end{tabular}
\end{table}

\subsection{Scores for Varying Fractal Dimension}
Fractal dimension can roughly be defined as the detail in a shape or curve expressed as an exponent of its scale. 
(See \cite{edgar2007measure} for discussion of different options for a precise definition.) In this section, we test the scores produced by our complexity metric on images of varying fractal dimension, from the dataset ``Color Fractal Images with Independent RGB Color Components'' \citep{ivanovici2010fractal}. This is a small dataset of nine high-resolution colour images, which are essentially the same except that they differ in fractal dimension. The images are generated using the midpoint displacement algorithm, which iteratively increases the fractal dimension of a piecewise-linear curve (i.e., a joined sequence of straight line segments), by slightly moving the midpoint of each piece. The dataset begins with a straight line and iterates until the fractal dimension is a certain value. The values for the different images range from 1.1 to 1.9 in increments of 0.1. This is repeated independently for each of the three colour channels. The resulting images are shown in Figure~\ref{fig:CFIICC-images}.

\begin{figure}
    \centering
    \includegraphics[width=0.9\textwidth]{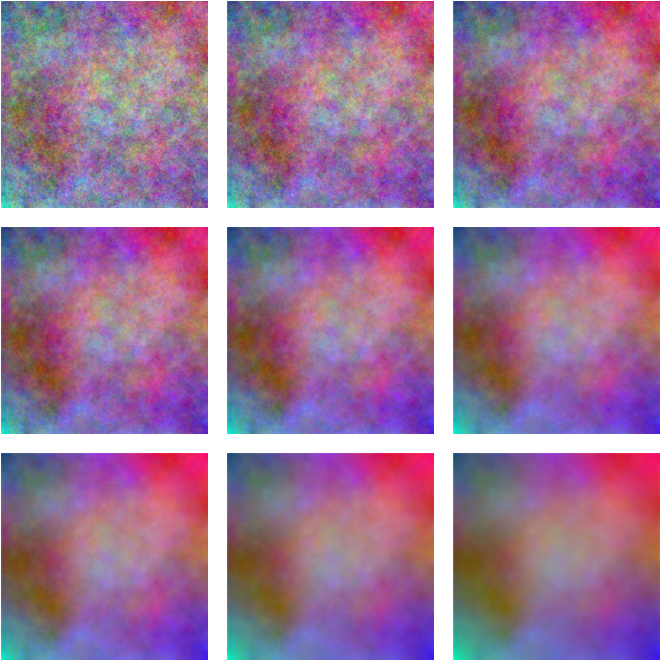}
    \caption{ \small  \small The nine images from the ``Color Fractal Images with Independent RGB Color Components'' dataset. Top-left images is 1.9, decreases in increments of 0.1 to bottom-right.}
    \label{fig:CFIICC-images}
\end{figure}

It is generally thought that a higher fractal dimension indicates greater complexity, and so it is interesting to see whether our method is able to reflect this. Table~\ref{tab:fract-results} shows the scores produced by our method for each of the nine images in ``Color Fractal Images with Independent RGB Color Components'', averaged over five runs, with the clustering at each level using 10 different GMM initializations and keeping the one with the highest data likelihood. The table shows the total complexity score, and the complexity score for each level. Looking at the total scores, there is a clear trend of increasing complexity scores for increasing fractal dimension, showing that the method can detect the sort of complexity expressed in fractal dimension. Looking at the breakdown of this total across the four levels of the hierarchy, we see that the effect of increased fractal dimension is greater for lower levels. Level 1 increases from 3.79 for fractal dimension 1.1 to 10.75 for fractal dimension 1.9, whereas Level 4 shows no systematic increase at all. The same information is shown graphically in Figure~\ref{fig:fract-results-as-plot}. This reflects the similarity of the images in Figure~\ref{fig:CFIICC-images} at a more global level, e.g., they all have a patch of pink/red in the top-right corner and a patch of purple/blue in the bottom-right corner. It is within each patch, that is at a smaller scale, that the images differ. 

\begin{table}
\caption{ \small Scores for increasing fractal dimension. For each fractal dimension, we run our method five times on the single image of that fractal dimension, and compute the (unbiased) sample standard deviation. } \label{tab:fract-results}
    \centering
\resizebox{0.95\textwidth}{!}{
\begin{tabular}{llllll}
\toprule
{} &         Total &        Level 1 &        Level 2 &        Level 3 &        Level 4 \\
\midrule
fract-dim 1.1 &  32.96 (0.15) &   3.79 (0.21) &   6.21 (0.27) &  10.42 (0.46) &  12.54 (0.16) \\
fract-dim 1.2 &  34.39 (0.67) &   4.02 (0.23) &   6.73 (0.33) &  11.22 (0.16) &  12.41 (0.24) \\
fract-dim 1.3 &  34.95 (0.19) &   4.24 (0.02) &   6.95 (0.21) &  11.06 (0.05) &  12.70 (0.00) \\
fract-dim 1.4 &  36.02 (0.46) &   4.60 (0.06) &   7.60 (0.55) &  11.58 (0.39) &  12.24 (0.31) \\
fract-dim 1.5 &  37.67 (0.48) &   5.31 (0.05) &   8.34 (0.39) &  11.63 (0.15) &  12.39 (0.24) \\
fract-dim 1.6 &  39.64 (0.37) &   6.27 (0.18) &   9.36 (0.27) &  11.56 (0.34) &  12.44 (0.14) \\
fract-dim 1.7 &  41.45 (0.30) &   7.35 (0.04) &   9.61 (0.18) &  12.10 (0.27) &  12.38 (0.14) \\
fract-dim 1.8 &  44.94 (0.20) &   8.96 (0.02) &  10.81 (0.15) &  12.59 (0.13) &  12.58 (0.05) \\
fract-dim 1.9 &  47.75 (0.14) &  10.75 (0.07) &  11.84 (0.25) &  12.75 (0.25) &  12.42 (0.29) \\
\bottomrule
\end{tabular}
}
\end{table}
\begin{figure}[h]
    \centering
    \includegraphics[width=0.7\textwidth]{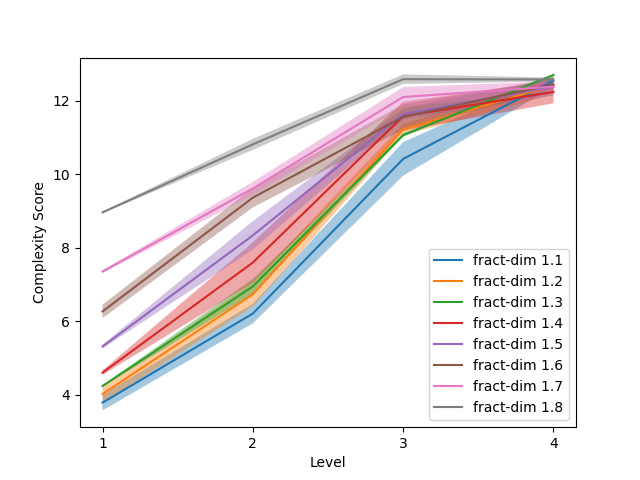}
    \caption{ \small  Trend of increasing complexity score for increasing fractal dimension, broken down by level of the hierarchy. Shaded regions are (unbiased) sample std dev, as in Table \ref{tab:fract-results}.}
    \label{fig:fract-results-as-plot}
\end{figure}

This shows that our method is not only able to reliably detect an increase in fractal dimension by assigning a higher complexity score, it also distributes the increase with fractal dimension over the four levels of the hierarchy in the correct way. There is a significant increase at the most local level and progressively smaller increases at higher levels.

\section{Discussion} \label{sec:conclusion}
\subsection{Limitations and Future Work}
One drawback of the current version of our method is the time complexity. Most existing image complexity metrics run in $<0.1s$ per image, whereas ours takes between $2s$ and $8s$ on average (the simple datasets are faster, ImageNet and CIFAR are the slowest). This can be roughly halved by reducing the range of $K$ explored from 8 to 5, with essentially no change in results. The run time is mostly due to the clustering step on the relatively large number of image patches. One future extension that could significantly improve runtime is, rather than considering all overlapping patches, to determine or approximate an optimal partition of the image into \emph{non-overlapping} patches. This could draw on work in visual tiling \citep{xiao2017optile}. Another future extension is to apply a similar method to other data domains, such as videos, audio, or text.

\subsection{Conclusion}
This paper presented a method for measuring image complexity. 
This task is inspired by the fact that humans cannot only explicitly recognize patterns in data, but can also detect whether the data contain a complex pattern at all. Our method can assign a complexity score to data, specifically to images, which quantifies how complex a pattern or structure they contain. Unlike existing ways of quantifying complexity, it is able to capture the amount of meaningful complexity, and does not judge random noise to be complex.   
 
It uses clustering to analyse an image as being built out of a hierarchy of patches, with each patch composed of the cluster indices of its sub-patches. Clustering is performed with the minimum description length principle to distinguish signal from noise. We gave a detailed derivation of our method, and then presented an experimental evaluation showing that it performs better than existing measures of image complexity. Most strikingly, it assigns a very low score to white noise, in contrast to existing methods, which all measure white noise as highly complex. This result is also supported theoretically with a proof that white noise contains only one cluster, as judged by MDL, which immediately implies that our method assigns it very low complexity. We then presented ablation studies 
and a further set of experiments showing that it can accurately capture complexity at different scales, that it is robust to small/moderate degradations in quality, either from the addition of Gaussian noise or from a reduction in resolution, and that it can accurately reflect increasing fractal dimension in fractal images.

\section*{Acknowledgements}
This work was supported by the Alan Turing Institute under the UK EPSRC grant EP/N510129/1 and by the AXA Research Fund.

\printbibliography

\appendix

\section{Proof of Correctness on White Noise}
In this section, we prove that the expected DDL of white noise under a GMM is a monotonically increasing function of the number of components in the GMM. 

\begin{lemma} \label{lemma:r}
When clustering white noise on $[0,1]^m$, with a $k$-component GMM, the radius $r$ of each cluster is approximated by 
 \begin{equation} \label{eq:r}
\frac{1}{\sqrt{3}\sqrt[m]{k}}\left(\frac{2}{\sqrt{3}}\right)^{1/m} \,.
\end{equation}
\end{lemma}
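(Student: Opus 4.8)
The plan is to exploit the symmetry of the white-noise distribution. Since white noise is uniform on $[0,1]^m$, and the uniform distribution has no preferred location or direction, I expect the maximum-likelihood $k$-component GMM to partition the cube into $k$ essentially congruent cells, each carrying an equal share $1/k$ of the total probability mass. Because the density is constant, equal mass means equal volume, so each cell has volume $1/k$. This is the first step: argue (heuristically, as befits an ``approximation'') that the EM fit tiles $[0,1]^m$ into $k$ equal-volume regions centred at the component means.

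Next I would convert this volume budget into a radius. Modelling each cell as a compact blob whose characteristic extent in every coordinate is $r$, its volume scales like $r^m$, so setting the per-cell volume to $1/k$ immediately forces $r \propto k^{-1/m}$, which is the dominant $k$-dependence appearing in \eqref{eq:r}. The remaining multiplicative constants are then pinned down by the covariance that EM assigns to each component: since the points inside a cell are approximately uniform, the fitted Gaussian's per-coordinate variance matches the variance of a uniform distribution over the cell's extent. Recalling that a uniform distribution on an interval of half-width $a$ has variance $a^2/3$, i.e.\ standard deviation $a/\sqrt{3}$, is exactly what introduces the factors $\tfrac{1}{\sqrt 3}$ and $\tfrac{2}{\sqrt{3}}$ in the statement; carrying this variance-to-radius conversion through the volume normalization should yield the stated closed form.

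The hard part will be making the geometric approximation precise. The GMM cells are neither exact balls nor exact cubes, and congruent shapes do not tile the cube without gaps or overlap, so the constant relating cell volume to the radius $r$ is only approximate---this is the source of the ``is approximated by'' in the lemma, and I would not try to prove it as an exact identity. A secondary subtlety is justifying that EM converges to the symmetric equal-volume configuration rather than a degenerate or lopsided one; here I would lean on the symmetry of the objective together with the observation that, for the purposes of Theorem~\ref{main-theorem}, it is chiefly the scaling $r \propto k^{-1/m}$ and the order of the constant that matter. Once \eqref{eq:r} is established, it feeds directly into the per-cluster covariance determinant and hence into the expected DDL, which is where the monotonicity in $k$ will ultimately be proved.
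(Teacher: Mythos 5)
Your opening move---equal probability mass, hence equal volume $1/k$ per cluster, hence $r \propto k^{-1/m}$---is consistent with what the paper does, and in fact the paper's argument can be read as an equal-volume argument in disguise. But the step where you pin down the constants is a genuine gap, and your proposed substitute for it would not produce \eqref{eq:r}. The paper gets the constants from an explicit sphere-packing picture: it assumes the cluster centres are arranged in a hexagonal-type close packing, so that the centres of any three mutually touching balls form an equilateral triangle of side $2r$, whose height is $2r\tfrac{\sqrt{3}}{2}=r\sqrt{3}$. Hence the centre spacing is $2r$ along one coordinate axis and $r\sqrt{3}$ along every other axis, the number of balls fitting in $[0,1]^m$ is $\tfrac{1}{(2r)^m}\left(\tfrac{2}{\sqrt{3}}\right)^{m-1}$, and setting this count equal to $k$ and solving for $r$ yields the closed form (equivalently: each ball occupies an effective box of volume $2r\,(r\sqrt{3})^{m-1}=1/k$, which is your equal-volume step with the packing geometry supplying the shape of the box). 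Every factor of $\sqrt{3}$ in the lemma is this triangle-height factor; it has nothing to do with the variance $a^2/3$ of a uniform distribution on an interval. That numerical coincidence is what misleads your plan.

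Concretely, two things go wrong if you carry out your proposal as written. First, you conflate the geometric radius $r$ of a cluster with a standard-deviation-like quantity of the fitted Gaussian; in the paper these are distinct, and the link between them is $\sigma = r^2/(m+2)$, obtained from the moments of a uniform distribution on an $m$-ball (Proposition \ref{proposition:m-ball-pdf} and Lemmas \ref{lemma:a}, \ref{lemma:variance})---it is derived \emph{after} Lemma \ref{lemma:r} and plays no role in its proof, so you cannot use the covariance fit to determine $r$ without circularity. Second, converting the volume budget $1/k$ into a radius requires choosing a cell shape, and your ``compact blob'' gives the wrong constant for any natural choice: a cube of side $2r$ gives $r = \tfrac{1}{2}k^{-1/m}$, while a Euclidean ball gives $r = \left(\pi^{m/2}/\Gamma(\tfrac{m}{2}+1)\right)^{-1/m} k^{-1/m}$, introducing Gamma-function constants rather than $\tfrac{1}{\sqrt{3}}$ and $\left(\tfrac{2}{\sqrt{3}}\right)^{1/m}$. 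Only the anisotropic packing-box $2r \times r\sqrt{3} \times \cdots \times r\sqrt{3}$ reproduces the stated formula. (One caveat in your favour: the paper itself is internally inconsistent here---its proof actually derives $r = \tfrac{1}{\sqrt{3}\sqrt[m]{k}}\left(\tfrac{\sqrt{3}}{2}\right)^{1/m}$, and the later lemmas use that version, while the lemma statement carries $\left(\tfrac{2}{\sqrt{3}}\right)^{1/m}$; so the precise constant is somewhat loose. But your observation that only the scaling $r \propto k^{-1/m}$ and the order of the constant matter for Theorem \ref{main-theorem}, while reasonable, is not a proof of the lemma as stated.)
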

\begin{proof}
For balls of radius $r$, the distance along each coordinate axis between their centres will be $2r$ for one of the dimensions and $2r\frac{\sqrt{3}}{2}$ for all other dimensions. This is because the centres of each 3 touching balls will form the vertices of an equilateral triangle with side length $2r$, which will then have height $2r\frac{\sqrt{3}}{2}$. Thus, the number of balls of radius $r$ that can fit inside each axis is $\frac{1}{2r}$ for the first axis and $\frac{1}{2r}\frac{2}{\sqrt{3}}$ for all other axes. The total number of balls that can fit inside $[0,1]^m$ is, therefore 
\[
\frac{1}{(2r)^m}\left(\frac{2}{\sqrt{3}}\right)^{m-1}\,.
\]
Conversely, given that the GMM will have $k$ clusters, we can approximate the radius of each cluster using 
\begin{gather*}
\frac{1}{(2r)^m}\left(\frac{2}{\sqrt{3}}\right)^{m-1} = k \\
(2r)^m = \frac{1}{k}\left(\frac{2}{\sqrt{3}}\right)^{m-1}  \\
2r = \frac{1}{\sqrt[m]{k}}\left(\frac{2}{\sqrt{3}}\right)^{\frac{m-1}{m}}  \\
r = \frac{1}{\sqrt{3}\sqrt[m]{k}}\left(\frac{\sqrt{3}}{2}\right)^{1/m}  \\
\end{gather*}
\end{proof}

\begin{proposition} \label{proposition:m-ball-pdf}
For uniformly distributed points in an $m$-dimensional hyperball of radius $r$, the pdf of the distance of a point from the centre is given by
\[
p(x) = m \frac{x^{m-1}}{r^m}
\]
\end{proposition}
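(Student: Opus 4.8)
The plan is to proceed via the cumulative distribution function (CDF) and then differentiate. For a point $Y$ drawn uniformly from the $m$-dimensional ball $B_r$ of radius $r$, the probability that its distance $\|Y\|$ from the centre is at most some value $x \in [0,r]$ is, by definition of the uniform distribution, the ratio of the volume of the concentric ball of radius $x$ to the volume of the whole ball $B_r$. So the first step is to write
\[
F(x) = \Pr[\|Y\| \le x] = \frac{\mathrm{vol}(B_x)}{\mathrm{vol}(B_r)}\,.
\]

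Next I would invoke the scaling of volume in $m$ dimensions: the volume of an $m$-ball of radius $\rho$ is $C_m \rho^m$, where $C_m = \pi^{m/2}/\Gamma(\tfrac{m}{2}+1)$ is the volume of the unit ball. The crucial observation is that the constant $C_m$ need not be evaluated explicitly, because it cancels in the ratio:
\[
F(x) = \frac{C_m x^m}{C_m r^m} = \frac{x^m}{r^m}\,, \qquad 0 \le x \le r\,.
\]
The pdf is then obtained simply by differentiating the CDF with respect to $x$, giving
\[
p(x) = F'(x) = \frac{d}{dx}\!\left(\frac{x^m}{r^m}\right) = m\,\frac{x^{m-1}}{r^m}\,,
\]
which is the claimed expression, valid on the support $[0,r]$ (and zero elsewhere).

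Since the derivation is short, there is no single hard step; the only point requiring a little care is justifying that the probability of landing in a sub-ball equals the volume ratio, which is exactly the defining property of a uniform density, and noting that the normalising constant $C_m$ of the ball volume cancels so its closed form is irrelevant. As a sanity check one can confirm that $p$ integrates to $1$ over $[0,r]$, namely $\int_0^r m x^{m-1}/r^m\,dx = 1$, and that for $m=1$ it reduces to the uniform density $1/r$ on a diameter, as expected.
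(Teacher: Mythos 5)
Your proof is correct. It takes a mildly but genuinely different route from the paper's. The paper argues in two steps: it asserts that the density of the radial distance is ``clearly proportional to $x^{m-1}$'' (implicitly, because the probability mass in a thin shell at radius $x$ scales with the shell's surface area), and then computes the normalizing constant $c$ by solving $c\int_0^r x^{m-1}\,dx = 1$, giving $c = m/r^m$. You instead compute the cumulative distribution function as a volume ratio, $F(x) = \mathrm{vol}(B_x)/\mathrm{vol}(B_r) = x^m/r^m$, using only the scaling law $\mathrm{vol}(B_\rho) = C_m\rho^m$ with the constant cancelling, and then differentiate. The practical difference is where the geometric input enters: the paper's proof leans on the shell-scaling claim, which is really the nontrivial content of the proposition stated without justification, whereas your CDF argument derives that proportionality as a consequence of the more elementary (and defining) property that uniform probability equals volume ratio. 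Your version is thus slightly more self-contained and rigorous; the paper's is shorter if one grants the proportionality claim. Both yield the same density on $[0,r]$, and your closing sanity checks (normalization, and the $m=1$ case) are a nice touch.
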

\begin{proof}
The probability density is clearly proportional to $x^{m-1}$, thus we need only to find the constant $c$ such that the pdf is appropriately normalized. 
\begin{gather*}
    c\int_{0}^r x^{m-1}dx = 1 \\
    c \frac{x^m}{m} \big |_0^r = 1\\
    c \frac{r^m}{m} = 1\\
    c = \frac{m}{r^m} \,.
\end{gather*}
\end{proof}

\begin{lemma} \label{lemma:a}
 When clustering white noise in $[0,1]^m$ dimensions, with a $k$-component GMM, the expected squared distance of a point from the centroid of its cluster, denoted $a$, is given by
 \begin{equation} \label{eq:a}
a = \frac{m}{3(m+2)}\left(\frac{\sqrt{3}}{2}\right)^{2/m}\frac{1}{k^{2/m}}\,,
\end{equation}
\end{lemma}
\begin{proof}
Using Proposition \ref{proposition:m-ball-pdf}, the expected squared distance from the centroid can be calculated directly from the pdf as
\begin{gather*}
    a = \frac{m}{r^m}\int_{0}^r x^{m+1}dx \\
    a = \frac{m}{r^m} \frac{x^{m+2}}{m+2} \big |_0^r = 1\\
    a = r^2\frac{m}{m+2}
\end{gather*}
Substituting $r$ from Lemma \ref{lemma:r} we get
\[
a = \frac{m}{3(m+2)}\left(\frac{\sqrt{3}}{2}\right)^{2/m}\frac{1}{k^{2/m}}\,,
\]
as desired.
\end{proof}

\begin{lemma} \label{lemma:variance}
When clustering white noise in $[0,1]^m$ dimensions, with a $k$-component GMM, the fit covariance matrix will $\Sigma$ will satisfy $\Sigma = \sigma I$, where 
\[
\sigma = \frac{1}{3(m+2)}\left(\frac{\sqrt{3}}{2}\right)^{2/m}\frac{1}{k^{2/m}}
\]
\end{lemma}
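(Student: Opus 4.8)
The plan is to reduce the covariance to the expected squared distance $a$ already computed in Lemma~\ref{lemma:a}. First I would argue that, because white noise on $[0,1]^m$ has no preferred direction, the maximum-likelihood covariance fitted to the points of any one cluster is isotropic, i.e.\ $\Sigma = \sigma I$ for a single scalar $\sigma$. This is consistent with the spherical (ball) model of the clusters already adopted in Lemmas~\ref{lemma:r} and~\ref{lemma:a}: a uniform distribution on a ball is rotationally symmetric, so its second-moment matrix about the centre is a scalar multiple of the identity, with all off-diagonal entries vanishing and all diagonal entries equal.

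The key step is then the identity linking the scalar $\sigma$ to the expected squared distance. When a GMM is fit by expectation--maximization, each component's covariance is the empirical covariance of the points assigned to it, so its trace equals the mean squared distance of those points from the centroid. Writing $\Sigma = \sigma I$, the trace is $\mathrm{tr}(\Sigma) = m\sigma$, while by definition this same trace is the expected squared distance $a$. Hence
\[
m\sigma = a \quad\Longrightarrow\quad \sigma = \frac{a}{m}.
\]
Substituting the value of $a$ from Lemma~\ref{lemma:a} makes the factor of $m$ cancel:
\[
\sigma = \frac{1}{m}\cdot\frac{m}{3(m+2)}\left(\frac{\sqrt{3}}{2}\right)^{2/m}\frac{1}{k^{2/m}} = \frac{1}{3(m+2)}\left(\frac{\sqrt{3}}{2}\right)^{2/m}\frac{1}{k^{2/m}},
\]
which is exactly the claimed expression.

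The main obstacle is justifying the isotropy $\Sigma = \sigma I$ rigorously rather than by symmetry alone. The unit cube $[0,1]^m$ is not rotationally invariant, and the ball-packing of Lemma~\ref{lemma:r} is only an approximation, so boundary clusters and the discrepancy between the packed balls and the true Voronoi cells could in principle introduce small off-diagonal or unequal diagonal terms. I would handle this at the same level of approximation as the preceding lemmas: treat each cluster as a uniform ball, for which isotropy is exact, and note that the trace identity $\mathrm{tr}(\Sigma)=a$ holds regardless of isotropy, so the only role of the symmetry argument is to distribute $a$ equally across the $m$ diagonal entries. Everything else is the routine trace computation above.
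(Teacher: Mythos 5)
Your proposal is correct and takes essentially the same route as the paper's own proof: both argue $\Sigma = \sigma I$ by symmetry and then equate the expected squared distance $a$ from Lemma~\ref{lemma:a} with $m\sigma$ (your trace identity is exactly the paper's linearity-of-expectation step), so the factor of $m$ cancels to give the stated $\sigma$. Even your closing caveat about boundary effects mirrors the paper's parenthetical remark that the symmetry argument holds only up to approximation at the boundary of the hyperbox.
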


\begin{proof}
It is clear that $\Sigma$ will be of the form $\sigma I$ for some $\sigma$, as the clusters will all be identical by symmetry (up to approximation at the boundary of the hyperbox, but this is small for more than a few clusters). 

Next, observe that the expected squared distance of a point from the centroid of its cluster is, by linearity of expectation, equal to the sum of the expected squared distances in each coordinate, i.e. $m\sigma$. Then, by Lemma \ref{lemma:a}, we get 
\[
\sigma = \frac{1}{3(m+2)}\left(\frac{\sqrt{3}}{2}\right)^{2/m}\frac{1}{k^{2/m}}
\]
as desired.
\end{proof}

\begin{proposition} \label{proposition:fx}
The DDL of a point under the distribution of its cluster depends only on its distance from the centroid of its cluster.
\end{proposition}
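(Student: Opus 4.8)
The plan is to expand the DDL directly from its definition in \eqref{eq:MVN-DDL} and then exploit the isotropy of the fitted covariance established in Lemma~\ref{lemma:variance}. Recall that the DDL of a point $x$ under its assigned cluster $k$ is $-\log p(x, \mu, \Sigma) - \log(n_k/N)$, with $p$ the Gaussian density of \eqref{eq:MVN-prob}. Taking the negative logarithm of that density, the DDL can be written as
\begin{equation*}
\tfrac{1}{2}(x-\mu_k)\Sigma_k^{-1}(x-\mu_k) + \tfrac{1}{2}\log\!\bigl((2\pi)^d|\Sigma_k|\bigr) - \log\frac{n_k}{N}\,.
\end{equation*}

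First I would substitute the covariance established in Lemma~\ref{lemma:variance}, namely $\Sigma_k = \sigma I$ for a scalar $\sigma$. Because $\Sigma_k^{-1} = \sigma^{-1} I$, the Mahalanobis quadratic form collapses to a scaled Euclidean norm, $(x-\mu_k)\Sigma_k^{-1}(x-\mu_k) = \sigma^{-1}\|x-\mu_k\|^2$. Thus the only way the location of $x$ enters the DDL is through $\|x-\mu_k\|^2$, the squared distance from the centroid.

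It then remains only to observe that the two other terms, $\tfrac{1}{2}\log((2\pi)^d|\Sigma_k|)$ and $-\log(n_k/N)$, are constant for a fixed cluster: they involve only the cluster-level quantities $\sigma$, $d$, $n_k$, and $N$, none of which change as $x$ moves within the cluster. Hence the DDL is an affine---indeed strictly increasing---function of the single scalar $\|x-\mu_k\|^2$, and so depends only on the distance of $x$ from its centroid, as claimed. The computation is essentially a single line once isotropy is available, so I anticipate no real obstacle; the one point that warrants care is the explicit appeal to Lemma~\ref{lemma:variance}, since isotropy is exactly what removes all directional dependence from the quadratic form. For a general, non-isotropic $\Sigma_k$ the DDL would depend on the direction of $x-\mu_k$ as well as its magnitude, so the proposition genuinely rests on the spherical form of the fitted covariance in the white-noise setting.
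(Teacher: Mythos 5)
Your proof is correct and follows essentially the same route as the paper's: write the negative log of the Gaussian density, use the isotropy $\Sigma_k = \sigma I$ to collapse the Mahalanobis form to a function of $\|x-\mu_k\|^2$, and note that the remaining terms are constants. If anything, yours is slightly more careful than the paper's version, since you explicitly invoke Lemma~\ref{lemma:variance} for isotropy (which the paper uses silently in passing from $(z-\mu)^T\Sigma^{-1}(z-\mu)$ to $|z-\mu|^2/\sigma$) and you also account for the $-\log(n_k/N)$ indexing term, which the paper's proof simply omits by restricting to the density term.
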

\begin{proof}

The probability density of a point $z$ under a cluster with centroid $\mu$ is
\begin{align*}
p(z)    =& \frac{1}{\sqrt{2\pi |\Sigma|}}\exp{\left(\frac{-1}{2}(z-\mu)^T\Sigma^{-1}(z-\mu)\right)} \\
        =& \frac{1}{\sqrt{2\pi |\Sigma|}}\exp{\left(\frac{-|(z-\mu)|^2}{2\sigma}\right)} \,.
\end{align*}
Thus, the DDL under the cluster distribution, which we denote $\bar{D}$, is given by
\begin{gather*}
 -\ln{\left(\frac{1}{\sqrt{2\pi |\Sigma|}}\exp{\left(\frac{-|(z-\mu)|^2}{2\sigma}\right)}\right)} \\
        = \frac{1}{2} \ln{(2\pi |\Sigma|}) + \frac{|(z-\mu)|^2}{2\sigma} \,.
\end{gather*}
In what follows, we will use the function $f(x)$ to denote the DDL under the cluster distribution of a point a distance $x$ from its centroid, where
\[
f(x) = \frac{1}{2} \ln{(2\pi |\Sigma|}) + \frac{x^2}{2\sigma} \,.
\]
\end{proof}

\begin{definition}
Denote as outliers, those points with greater DDL under their cluster distribution than under the prior distribution.
\end{definition}
 
\begin{lemma} \label{lemma:d}
When clustering white noise, so that the prior distribution is $[0,1]^m$, a point is an outlier if and only if it is greater than a distance $d$ from is centroid, where $d=\sqrt{\sigma \ln{\frac{1}{2\pi |\Sigma|}}}$.
\end{lemma}
\begin{proof}
The DDL of a point treated as an outlier on the uniform $m$-box $[0,1]^m$ is 0, because treating as an outlier means using the prior distribution, which is uniform $p(x)=1$, giving DDL of $\ln{1} = 0$. Thus, a point a distance $x$ from its centroid is not an outlier if and only if its DDL under the distribution of its cluster is strictly negative:
\begin{gather*}
    DDL(x) < 0 \\
    \frac{1}{2} \ln{(2\pi |\Sigma|}) + \frac{x^2}{2\sigma} < 0 \\
    \frac{x^2}{\sigma} < -\ln{(2\pi |\Sigma|}) \\
    x^2 < \sigma \ln{\frac{1}{2\pi |\Sigma|}} \\
    x < \sqrt{\sigma \ln{\frac{1}{2\pi |\Sigma|}}} \,.
\end{gather*}
We refer to this distance $d$ as the inlier radius.
\end{proof}

\begin{lemma} \label{lemma:saturating-k}
When clustering white noise in $m$ dimensions using a GMM with $k$ components, some points will be classed as outliers if and only if $k$ satisfies
\[
k < \frac{2e\sqrt{\pi}}{\sqrt{3}}\left(\frac{e}{3(m+2)}\right)^{m/2} \\
\]
\end{lemma}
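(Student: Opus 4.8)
The plan is to reduce the qualitative event ``some points are outliers'' to a single quantitative comparison between the inlier radius and the cluster radius, both of which the preceding lemmas have already expressed in terms of $k$. By Lemma \ref{lemma:d}, a point is an outlier exactly when its distance from the cluster centroid exceeds the inlier radius $d = \sqrt{\sigma \ln\frac{1}{2\pi|\Sigma|}}$, while the geometric model underlying Lemma \ref{lemma:r} places the points of each cluster uniformly within a ball of radius $r$. Hence outliers occur if and only if the inlier radius lies strictly inside the cluster ball, i.e.\ $d < r$: when $d < r$ the points at distances in $(d,r]$ form a set of positive measure and are all outliers, whereas when $d \geq r$ every point is within the inlier radius and no outliers arise. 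I would also dispatch the degenerate regime in which $d$ fails to be real (this occurs only when $2\pi|\Sigma| \geq 1$, where the DDL is positive everywhere and \emph{all} points are outliers) by checking that, for the values of $|\Sigma|$ produced here with $k \geq 1$, one always has $2\pi|\Sigma| < 1$, so that $d < r$ is indeed the governing condition.

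First I would eliminate the logarithm and the determinant from the comparison $d < r$. Combining the expected-squared-distance identity of Lemma \ref{lemma:a} with the variance computation of Lemma \ref{lemma:variance} (via $a = m\sigma$) gives the clean relation $\sigma = r^2/(m+2)$, equivalently $r^2 = (m+2)\sigma$. Squaring $d < r$ and substituting this turns the condition into $\sigma \ln\frac{1}{2\pi|\Sigma|} < (m+2)\sigma$, and cancelling the positive factor $\sigma$ collapses everything to the strikingly simple inequality $\ln\frac{1}{2\pi|\Sigma|} < m+2$, that is, $2\pi|\Sigma| > e^{-(m+2)}$.

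Next I would substitute the explicit value of $|\Sigma|$. Since $\Sigma = \sigma I$ is $m \times m$ we have $|\Sigma| = \sigma^m$, and inserting $\sigma$ from Lemma \ref{lemma:variance} writes $2\pi|\Sigma|$ as a fixed constant times $k^{-2}$. The inequality $2\pi|\Sigma| > e^{-(m+2)}$ thereby becomes one of the form $k^2 < C$; taking square roots and consolidating the accumulated powers of $3$, $m+2$ and $e$ yields the claimed bound $k < \frac{2e\sqrt\pi}{\sqrt3}\left(\frac{e}{3(m+2)}\right)^{m/2}$. The direction is the expected one: larger $k$ means smaller $\sigma$, hence larger $\ln\frac{1}{2\pi|\Sigma|}$, so the condition eventually fails and clusters become too tightly concentrated to produce any outliers.

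The genuinely substantive step is the first: arguing carefully that the event ``some points are outliers'' is equivalent to the comparison $d < r$, and confirming that the imaginary-$d$ (all-outlier) regime never intervenes for $k \geq 1$. Everything afterwards is bookkeeping—the cancellation of $\sigma$ enabled by $r^2 = (m+2)\sigma$ is precisely what makes the constants collapse into closed form, and the remaining work is only the algebra of collecting the powers into the stated expression.
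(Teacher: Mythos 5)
Your proof is correct and takes essentially the same route as the paper's: both reduce ``some points are outliers'' to the comparison $d < r$ and then solve for $k$; your cancellation of $\sigma$ via $r^2 = (m+2)\sigma$ is simply a tidier form of the paper's step of substituting the explicit $k$-dependent formulas for $d^2$ and $r^2$ and cancelling their common factor (both reach the same intermediate inequality $\ln\frac{1}{2\pi|\Sigma|} < m+2$), and your check of the degenerate regime where $d$ fails to be real is extra rigor that the paper omits. One caveat: carried through exactly, the algebra yields the constant $e\sqrt{6\pi}/2 \approx 2.17e$, which matches neither the lemma statement ($2e\sqrt{\pi}/\sqrt{3} \approx 2.05e$) nor the paper's own proof, which ends at $e\sqrt{3\pi}/2$ after silently replacing $(2\pi)^{1/m}$ by $\pi^{1/m}$; so your claim that the bookkeeping ``yields the claimed bound'' holds only up to a constant-factor discrepancy that originates in the paper itself.
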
 \label{lemma:k-at-which-contained}
\begin{proof}
A point will be an outlier if and only if it is within the cluster, i.e. with a distance $r$ from its centroid, but outsider the inlier radius $d$. This is possible if and only if 
\begin{gather*}
    d < r \\
    \sqrt{\sigma \ln{\frac{1}{2\pi |\Sigma|}}} <\frac{1}{\sqrt{3}\sqrt[m]{k}}\left(\frac{\sqrt{3}}{2}\right)^{1/m} \\
    \sigma \ln{\frac{1}{2\pi |\Sigma|}} < \frac{1}{3k^{2/m}}\left(\frac{\sqrt{3}}{2}\right)^{2/m} \,.
\end{gather*}
As $\Sigma = \sigma I$, we can sub in $|\Sigma| = \sigma^m$, and also sub in $\sigma$ from Lemma \ref{lemma:variance}:
\begin{gather*}
    \frac{1}{3(m+2)}\left(\frac{\sqrt{3}}{2}\right)^{2/m}\frac{1}{k^{2/m}}\ln{\frac{1}{2\pi \sigma^m}} < \frac{1}{3k^{2/m}}\left(\frac{\sqrt{3}}{2}\right)^{2/m} \\
    \frac{1}{(m+2)}\ln{\frac{1}{2\pi \sigma^m}} < 1 \\
    \ln{\frac{1}{2\pi \sigma^m}} < m+2 \\
    \ln{\frac{1}{\sigma^m}} < m+2 + \ln 2{(\pi)} \\
    m\ln{\frac{1}{\sigma}} < m+2 + \ln 2{(\pi)} \\
    \ln{\frac{1}{\sigma}} < 1 + \frac{2 + \ln 2{(\pi)}}{m}   \\
    \sigma > \frac{1}{e^{1 + \frac{2 + \ln 2{(\pi)}}{m}}}   \\
    \frac{1}{3(m+2)}\left(\frac{\sqrt{3}}{2}\right)^{2/m}\frac{1}{k^{\frac{2m-2}{m}}} > \frac{1}{\sqrt[m]{\pi}e^{1 + \frac{2}{m}}}   \\
    \frac{\sqrt[m]{\pi}e^{1 + \frac{2}{m}}}{3(m+2)}\left(\frac{\sqrt{3}}{2}\right)^{2/m} > k^{2/m} \\
    k^{2/m} < \frac{\sqrt[m]{\pi}e^{1 + \frac{2}{m}}}{3(m+2)}\left(\frac{\sqrt{3}}{2}\right)^{2/m} \\
    k < \frac{\sqrt{\pi}e^{\frac{m}{2} + 1}}{(3(m+2))^{m/2}}\left(\frac{2}{\sqrt{3}}\right) \\
    k < \frac{\sqrt{\pi}e^{\frac{m}{2} + 1}}{(3(m+2))^{m/2}}\frac{\sqrt{3}}{2} \\
    k < \frac{\sqrt{\pi}e\sqrt{3}}{2}\frac{e^{m/2}}{(3(m+2))^{m/2}} \\
    k < \frac{e\sqrt{3\pi}}{2}\left(\frac{e}{3(m+2)}\right)^{m/2} \\
\end{gather*}
\end{proof}

\begin{lemma} \label{lemma:form-of-EDDL}
For a $k$-component GMM fit on $m$-dimensional white noise, the expected DDL of a point is 
\[
\frac{m}{r^m}\int_0^x x^{m-1} \min(\{0,f(x)\}) dx + \ln{k} \,.
\]
with $f(x)$ defined as in Proposition \ref{proposition:fx}.
\end{lemma}
\begin{proof}
The DDL of a point can be decomposed as the number of bits needed to specify which cluster the point belongs to, plus the DDL of the point under that cluster. The former is equal to $\ln{k}$, because each cluster will, by symmetry, be equally sized. The latter, we denote $\bar{D}$, and is given by
\[
\bar{D} = \int_0^r p(r) \min(\{0,f(x)\}) \,,
\]
where $p(r)$ is the probability density function of the distance of a point from its centroid and $f(x)$ specifies the DDL of a point in terms of the distance from its centroid. 

Substituting for $p(x)$ from Proposition \ref{proposition:m-ball-pdf} gives 
\[
\bar{D} = \frac{m}{r^m}\int_0^x x^{m-1} \min(\{0,f(x)\}) dx \,.
\]
\end{proof}

\begin{lemma} \label{lemma:increasing-when-outliers}
When clustering white noise on $[0,1]^m$ with a $k$-component GMM, for $k \leq d$, the expected DDL is an increasing function of $k$. 
\end{lemma}
\begin{proof}
When some points are outliers, we have, using Lemma \ref{lemma:form-of-EDDL}
\begin{align*}
    \bar{D} &= \frac{m}{r^m}\int_0^r x^{m-1} \min(\{0,f(x)\}) dx \\
            &= \frac{m}{r^m}\int_{\{x \in [0,r] | f(x) < 0\}} r^{m-1} f(x) dx \,.
\end{align*}
By Lemma \ref{lemma:d}, $\{x \in [0,r] | f(x) < 0\} = [0,d)$, giving
\begin{align*}
     \bar{D} &= \frac{m}{r^m}\int_0^d x^{m-1} f(x) dx \\
\end{align*}
Substituting $f(x)$ from Proposition \ref{proposition:fx} and integrating:
\begin{align*}
    \bar{D} &= \frac{m}{r^m}\int_0^d x^{m-1} \left(\frac{1}{2} \ln{(2\pi |\Sigma|}) + \frac{x^2}{2\sigma}\right)  dx \\
            &= \frac{m}{r^m}\int_0^d \frac{1}{2} \ln{(2\pi |\Sigma|})x^{m-1}  + \frac{x^{m+1}}{2\sigma}  dx \\
            &= \frac{m}{r^m} \left(d^{m} \frac{1}{2m} \ln{(2\pi |\Sigma|}) + \frac{d^{m+2}}{2\sigma(m+2)} \right) \\ 
            &= \frac{d^{m}}{2r^m}  \left(\ln{(2\pi |\Sigma|}) + d^{2}\frac{m}{\sigma(m+2)} \right) \,.
\end{align*}
Substituting $d$ from Lemma \ref{lemma:d}:
\begin{align*}
     \bar{D} &= \frac{(\sigma \ln{\frac{1}{2\pi |\Sigma|}})^{m/2}}{2r^m}  \left(\ln{(2\pi |\Sigma|}) + (\sigma \ln{\frac{1}{2\pi |\Sigma|}})\frac{m}{\sigma(m+2)} \right) \\
            &= \frac{(\sigma \ln{\frac{1}{2\pi |\Sigma|}})^{m/2}}{2r^m}  \left(\ln{(2\pi |\Sigma|}) + (\ln{\frac{1}{2\pi |\Sigma|}})\frac{m}{m+2} \right) \\
            &= \frac{(\sigma \ln{\frac{1}{2\pi |\Sigma|}})^{m/2}}{2r^m}  \left(\ln{\frac{1}{2\pi |\Sigma|}}\frac{-2}{m+2} \right) \\
            &= \frac{-\sigma ^{m/2}}{r^m(m+2)}  \left(\ln{\frac{1}{2\pi |\Sigma|}} \right)^{\frac{m}{2}+1} \,.
\end{align*}
Substituting $\sigma$ from Lemma \ref{lemma:variance}:
\begin{align*}
      \bar{D} &= -\left(\frac{1}{3(m+2)}\left(\frac{\sqrt{3}}{2}\right)^{2/m}\frac{1}{k^{2/m}}\right)^{m/2} \frac1{r^m(m+2)} \left(\ln{\frac{1}{2\pi |\Sigma|}} \right)^{\frac{m}{2}+1} \\
            &= -\left(\frac{1}{(3(m+2))^{m/2}}\left(\frac{\sqrt{3}}{2}\right)\frac{1}{k}\right) \frac{1}{r^m(m+2)} \left(\ln{\frac{1}{2\pi |\Sigma|}} \right)^{\frac{m}{2}+1} \\
            &= -\left(\frac{1}{(3(m+2))^{m/2}}\right) \left(\frac{\sqrt{3}}{2}\right) \frac{1}{kr^m(m+2)} \left(\ln{\frac{1}{2\pi |\Sigma|}} \right)^{\frac{m}{2}+1} \\
            &=  \frac{-3 \sqrt{3}}{r^m2(3)^{m/2}k(m+2)} \left(\frac{1}{(m+2)}\ln{\frac{1}{2\pi |\Sigma|}} \right)^{\frac{m}{2}+1} \,.
\end{align*}
Substituting $r$ from Lemma \ref{lemma:r}:
\begin{align*}
       \bar{D} &= -\left(\sqrt{3}\sqrt[m]{k}\left(\frac{2}{\sqrt{3}}\right)^{1/m} \right)^m\frac{-3 \sqrt{3}}{2(3)^{m/2}k(m+2)} \left(\frac{1}{(m+2)}\ln{\frac{1}{2\pi |\Sigma|}} \right)^{\frac{m}{2}+1} \\
       \bar{D} &= -(\sqrt{3})^m k\frac{2}{\sqrt{3}} \frac{-3 \sqrt{3}}{2(3)^{m/2}k(m+2)} \left(\frac{1}{(m+2)}\ln{\frac{1}{2\pi |\Sigma|}} \right)^{\frac{m}{2}+1} \\
       \bar{D} &= \frac{-3}{(m+2)^{\frac{m}{2}+2}} \left(\ln{\frac{1}{2\pi |\Sigma|}} \right)^{\frac{m}{2}+1} \,.
\end{align*}
Substituting $\Sigma=\sigma I$, with $\sigma$ from Lemma \ref{lemma:variance}:
\begin{align*}
       \bar{D} &= \frac{-3}{(m+2)^{\frac{m}{2}+2}} \left(\ln{\frac{1}{2\pi}} + \ln{\frac{1}{|\Sigma|}} \right)^{\frac{m}{2}+1} \\
            &= \frac{-3}{(m+2)^{\frac{m}{2}+2}} \left(\ln{\frac{1}{2\pi}} + \ln{\frac{1}{\sigma^m}} \right)^{\frac{m}{2}+1} \\
            &= \frac{-3}{(m+2)^{\frac{m}{2}+2}} \left(\ln{\frac{1}{2\pi}} + \ln{\left(3^m(m+2)^m\frac{4}{3}k^2\right)} \right)^{\frac{m}{2}+1} \\
            &= \frac{-3}{(m+2)^{\frac{m}{2}+2}} \left((m)\ln{3} +m\ln{(m+2)} + \ln{\tfrac{4}{3}} + 2 \ln{k} - \ln{2\pi} \right)^{m/2} \\
            &= \frac{-3}{(m+2)^{\frac{m}{2}+2}} \left(m\ln{(3m+6)} + 2 \ln{k} + \ln{\tfrac{2}{3\pi}} \right)^{\frac{m}{2}+1} \\
            &= \frac{-3}{(m+2)^{\frac{m}{2}+2}} \left(m\ln{(3m+6)} + 2 \ln{k} + \ln{\tfrac{2}{3\pi}} \right)^{\frac{m}{2}+1} \\
\end{align*}
Differentiating, with respect to $k$, the total DDL of $\bar{D} + \ln{k}$:
\begin{align*}
    &\frac{d}{dk}\left(\frac{-3}{(m+2)^{\frac{m}{2}+2}}(m \ln{(3m+6)}+\ln{\tfrac{2}{3\pi}}+2\ln{k})^{\frac{m}{2}+1} + \ln{k}\right) \\
    &=\frac{-3}{(m+2)^{\frac{m}{2}+2}}\frac{d}{dk}\left((m \ln{(3m+6)}+\ln{\tfrac{2}{3\pi}}+2\ln{k})^{\frac{m}{2}+1}\right) + \frac{1}{k} \\
    &=\frac{-3}{(m+2)^{\frac{m}{2}+2}}\left((\frac{m}{2}+1)(m \ln{(3m+6)}+\ln{\tfrac{2}{3\pi}}+2\ln{k})^{m/2}\frac{2}{k}\right) + \frac{1}{k} \\
    &=\frac{-3}{(m+2)^{\frac{m}{2}+2}}\frac{(m+2)}{k}\left(m \ln{(3m+6)}+\ln{\tfrac{2}{3\pi}}+2\ln{k}\right)^{m/2} + \frac{1}{k} \\
    &=\frac{-3}{k(m+2)^{\frac{m}{2}+1}}\left(m \ln{(3m+6)}+\ln{\tfrac{2}{3\pi}}+2\ln{k}\right)^{m/2} + \frac{1}{k} \,.
\end{align*}
We want to show this derivative is positive:
\begin{align*}
    &\frac{-3}{k(m+2)^{\frac{m}{2}+1}}^{\frac{m}{2}+2}\left(m \ln{(3m+6)}+\ln{\tfrac{2}{3\pi}}+2\ln{k}\right)^{m/2} + \frac{1}{k} > 0\\
    &\iff \frac{1}{k} > \frac{3}{k(m+2)^{\frac{m}{2}+1}}\left(m \ln{(3m+6)}+\ln{\tfrac{2}{3\pi}}+2\ln{k}\right)^{m/2}\\
    &\iff \frac{3}{(m+2)^{\frac{m}{2}+1}}\left(m \ln{(3m+6)}+\ln{\tfrac{2}{3\pi}}+2\ln{k}\right)^{m/2} < 1\\
\end{align*}
As, by assumption, some points are outliers, we can use the upper bound on $k$ from Lemma \ref{lemma:k-at-which-contained}:
\begin{align*}
    &\frac{3}{(m+2)^{\frac{m}{2}+1}}\left(m \ln{(3m+6)}+\ln{\tfrac{2}{3\pi}}+2\ln{k}\right)^{m/2} \\
    &= \frac{3}{(m+2)^{\frac{m}{2}+1}}\left(m \ln{(3m+6)}+\ln{\tfrac{2}{3\pi}}+\ln{k^2}\right)^{m/2} \\
    &\leq \frac{3}{(m+2)^{\frac{m}{2}+1}}\left(m \ln{(3m+6)}+\ln{\tfrac{2}{3\pi}}+\ln{\left(\frac{4\pi e^2}{3}\left(\frac{e}{3(m+2)}\right)^m\right)}\right)^{m/2} \\
    &= \frac{3}{(m+2)^{\frac{m}{2}+1}}\left(m \ln{(3m+6)} + \ln{\tfrac{2}{3\pi}} + 2 + \ln{\tfrac{4\pi}{3}} + m - m\ln{(3m+6)}\right)^{m/2} \\
    &= \frac{3}{(m+2)^{\frac{m}{2}+1}}\left(2 + \ln{\tfrac{2}{3\pi}} + \ln{\tfrac{4\pi}{3}} + m\right)^{m/2} \\
    &= \frac{3}{(m+2)^{\frac{m}{2}+1}}\left(m + 2 + \ln{\tfrac{8}{9}}\right)^{m/2} \\
    &< \frac{3}{(m+2)^{\frac{m}{2}+1}}(m + 2)^{m/2} \\
    &= \frac{3}{m+2} \leq 1 \,,
\end{align*}
where the last inequality holds because the dimension $m$, is $\geq 1$.
\end{proof}

\begin{lemma} \label{lemma:constant-when-no-outliers}
When clustering white noise on $[0,1]^m$ with a $k$-component GMM, for $k>d$, the expected DDL is independent of $k$. 
\end{lemma}
\begin{proof}
When no points are outliers, $f(x) = g(x)$ for all $x$, so, using Lemma \ref{lemma:form-of-EDDL}
\begin{align*}
    \bar{D} &= \frac{m}{r^m}\int_0^r x^{m-1} f(x) dx \\
            &= \frac{m}{r^m}\int_0^r x^{m-1} \left(\frac{1}{2} \ln{(2\pi |\Sigma|}) + \frac{x^2}{2\sigma}\right)  dx \\
            &= \frac{m}{r^m} \left(\frac{r^m}{2m} \ln{(2\pi |\Sigma|}) + \frac{r^{m+2}}{2\sigma(m+2)}\right) \\
            &= \frac{1}{2} \ln{(2\pi |\Sigma|}) + \frac{r^2 m}{\sigma(m+2)} \,.
\end{align*}
Substituting $r$ from Lemma \ref{lemma:r}:
\begin{align*}
    \bar{D} &= \frac{1}{2} \left(\ln{(2\pi |\Sigma|}) + \frac{1}{3k^{2/m}}\left(\frac{\sqrt{3}}{2}\right)^{2/m}\frac{m}{\sigma(m+2)}\right)  dx \\
\end{align*}
Substituting $\sigma$ from Lemma \ref{lemma:variance}:
\begin{align*}
    \bar{D} &= \frac{1}{2} \left(\ln{(2\pi |\Sigma|}) + \frac{1}{3k^{2/m}}\left(\frac{\sqrt{3}}{2}\right)^{2/m}3(m+2)\left(\frac{2}{\sqrt{3}}\right)^{2/m}k^{2/m}\frac{m}{m+2}\right)  dx \\
    \bar{D} &= \frac{1}{2} \left(\ln{(2\pi |\Sigma|}) + m \right) \,.
\end{align*}
Substituting $\Sigma=\sigma I$, with $\sigma$ from Lemma \ref{lemma:variance}:
\begin{align*}
    \bar{D} &= \frac{1}{2} \left(\ln{(2\pi |\Sigma|}) + m \right) \\
            &= \frac{1}{2} \left(\ln{(2\pi)} + m \ln{{\sigma}} + m \right) \\
            &= \frac{1}{2} \left(\ln{(2\pi)} + m \ln{\left(\frac{1}{3(m+2)}\left(\frac{\sqrt{3}}{2}\right)^{2/m}\frac{1}{k^{2/m}}\right)} + m \right) \\
            &= \frac{1}{2} \left(\ln{(2\pi)} - m \ln{\left(3(m+2)\left(\frac{2}{\sqrt{3}}\right)^{2/m}k^{2/m}\right)} + m \right) \\
            &= \frac{1}{2} \left(\ln{(2\pi)} - m \ln{(3m+6)} + 2\ln{\tfrac{2}{\sqrt{3}}} -2\ln{k} + m \right) \\
            &= \frac{1}{2} \left(\ln{\frac{4\pi}{\sqrt{3}}}+ m \ln{\left(1+\frac{1}{3m+6}\right)} - 2\ln{k} + m \right) \\
\end{align*}
The full DDL is then
\begin{align*}
    & \bar{D} + \ln{k} \\
    &= \frac{1}{2} \left(\ln{\frac{4\pi}{\sqrt{3}}}+ m \ln{\left(1+\frac{1}{3m+6}\right)} - 2\ln{k} + m \right)  + \ln{k} \\
    &= \frac{1}{2} \left(\ln{\frac{4\pi}{\sqrt{3}}}+ m \ln{\left(1+\frac{1}{3m+6}\right)} + m \right) \\
\end{align*}
\end{proof}

\begin{theorem}
When clustering white noise in $[0,1]^m$, using a GMM with $k$ components, the expected DDL of a point is a monotonically increasing function of $k$.
\end{theorem}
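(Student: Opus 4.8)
The plan is to assemble the theorem from the two regime lemmas, which between them cover every value of $k$, and then to verify that the two pieces splice together without a downward jump. Write $E(k)=\bar{D}+\ln k$ for the expected DDL of a point under the best-fitting $k$-component GMM, as given by Lemma~\ref{lemma:form-of-EDDL}. Lemma~\ref{lemma:saturating-k} supplies a threshold
\[
k^\dagger=\frac{2e\sqrt{\pi}}{\sqrt{3}}\left(\frac{e}{3(m+2)}\right)^{m/2},
\]
below which some points are classed as outliers (the inlier radius $d$ of Lemma~\ref{lemma:d} satisfies $d<r$) and at or above which every point of a cluster is an inlier ($d\ge r$). This threshold is exactly the case split I would use, treating $k$ as a continuous variable so that a sign analysis of $E'$ governs monotonicity of the underlying integer sequence.

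First I would treat the small-$k$ regime $k<k^\dagger$. Here the outlier branch of Lemma~\ref{lemma:form-of-EDDL} applies, and Lemma~\ref{lemma:increasing-when-outliers} shows $E$ is strictly increasing: its derivative equals $\tfrac1k\bigl(1-\tfrac{3}{(m+2)^{m/2+1}}(m\ln(3m+6)+\ln\tfrac{2}{3\pi}+2\ln k)^{m/2}\bigr)$, and the upper bound $k<k^\dagger$ forces the subtracted term below $\tfrac{3}{m+2}\le 1$, so the derivative is positive. Next I would treat the large-$k$ regime $k\ge k^\dagger$. Here no points are outliers, so the no-outlier branch of Lemma~\ref{lemma:form-of-EDDL} applies and Lemma~\ref{lemma:constant-when-no-outliers} collapses $E$ to the constant $\tfrac12\bigl(\ln\tfrac{4\pi}{\sqrt3}+m\ln(1+\tfrac1{3m+6})+m\bigr)$, which does not depend on $k$.

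It then remains to glue the pieces, and this is the step I expect to be the only genuine obstacle. Each regime lemma is proved under an assumption on $k$, so a priori the increasing branch on $[1,k^\dagger)$ and the flat branch on $[k^\dagger,\infty)$ could meet with a downward discontinuity at $k^\dagger$, which would wreck monotonicity. The resolution is the observation that at $k=k^\dagger$ one has exactly $d=r$, so the inlier integral $\int_0^{d}x^{m-1}f(x)\,dx$ and the full-cluster integral $\int_0^{r}x^{m-1}f(x)\,dx$ coincide; hence the two closed forms for $E$ agree at the threshold and $E$ is continuous there. Equivalently, the inequality behind Lemma~\ref{lemma:saturating-k} holds with equality at $k^\dagger$. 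With continuity established, a strictly increasing branch followed by a constant branch has no downward step, so $E$ is monotonically non-decreasing on its entire domain, which is the asserted behaviour (the statement's \emph{increasing} being the increasing-then-flat profile). Since the integration and the derivative bound are already discharged inside Lemmas~\ref{lemma:increasing-when-outliers} and~\ref{lemma:constant-when-no-outliers}, the theorem itself reduces to this case split together with the continuity check.
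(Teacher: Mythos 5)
Your proof is correct and takes essentially the same route as the paper: the paper's own proof of the theorem is precisely the two-line assembly of Lemma~\ref{lemma:increasing-when-outliers} (expected DDL strictly increasing while some points are outliers) and Lemma~\ref{lemma:constant-when-no-outliers} (expected DDL constant once no points are outliers), with the regimes separated by the threshold of Lemma~\ref{lemma:saturating-k}. Your extra splice step --- checking that $d=r$ exactly at the threshold $k^\dagger$, so the inlier integral $\int_0^d x^{m-1}f(x)\,dx$ and the full-cluster integral $\int_0^r x^{m-1}f(x)\,dx$ coincide and the two closed forms meet continuously with no downward jump --- is a detail the paper's proof silently omits, and it is a small but genuine improvement in rigor rather than a different approach.
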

\begin{proof}
By Lemma \ref{lemma:increasing-when-outliers}, the expected DDL is strictly increasing in $k$ up to $k=d$. By Lemma \ref{lemma:constant-when-no-outliers}, the expected DDL is constant in $k$ for $k > d$.
\end{proof}

\section{Worked Examples}
Section \ref{subsec:worked-example} presented a single worked example of my image complexity metric on a single, randomly chosen image from ImageNet. This section contains further examples from ImageNet and others from Cifar, MNIST, and DTD2.

\begin{figure*}
    \centering
    \begin{tabular}{@{}c@{}c@{}}
    \includegraphics[width=0.53\textwidth]{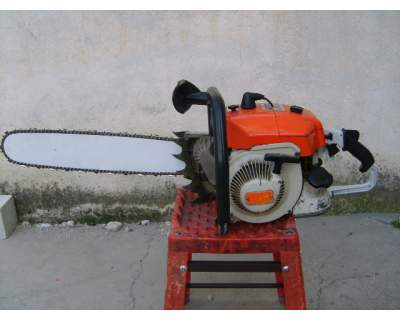} & 
    \end{tabular}
    \caption{ \small  \small Example of a relatively high-resolution real-world image from imagenet. ID: n03445777\_10762.} \label{fig:imgs/im-chainsaw-image}
\end{figure*}
\noindent \begin{large} \textbf{Layer 1} \end{large} \\
Number of points to be clustered (pixels): 50246 \\
Number of components found by MDL, as per \eqref{eq:k-star}: 8 \\
Assign each pixel a label from $0, \dots, 6$, and form patch signatures as multisets of labels inside all $4 \times 4$ patches, which gives 48450 patches, of 2628 different unique values. \\
Entropy of resulting categorical distribution of patch signatures: \textbf{9.280} \\

\noindent \begin{large} \textbf{Layer 2} \end{large} \\
Number of points to be clustered: 48450 \\
Number of components found by MDL, as per \eqref{eq:k-star}: 6 \\
Assign each point a label from $0, \dots, 7$, and form patch signatures as multisets of labels inside all $8 \times 8$ patches, which gives 44954 patches, of 3809 different unique values. \\
Entropy of resulting categorical distribution of patch signatures: \textbf{9.363} \\

\noindent \begin{large} \textbf{Layer 3} \end{large} \\
Number of points to be clustered: 44954 \\
Number of components found by MDL, as per \eqref{eq:k-star}: 8 \\
Assign each point a label from $0, \dots, 7$, and form patch signatures as multisets of labels inside all $16 \times 16$ patches, which gives 38346 patches, of 6956 different unique values. \\
Entropy of resulting categorical distribution of patch signatures: \textbf{12.307} \\

\noindent \begin{large} \textbf{Layer 4} \end{large} \\
Number of points to be clustered: 38346 \\
Number of components found by MDL, as per \eqref{eq:k-star}: 7 \\
Assign each point a label from $0, \dots, 6$, and form patch signatures as multisets of labels inside all size $32 \times 32$ patches, which gives 26666 patches, of 5325 different unique values. \\
Entropy of resulting categorical distribution of patch signatures: \textbf{12.378} \\
\\
\begin{large}
Total complexity: $7.995 + 10.194 + 12.772 + 12.753 = \boldsymbol{43.314}$
\\
\end{large}

\subsection{Cifar Car Image}
\begin{figure*}[h]
    \centering
    \begin{tabular}{@{}c@{}c@{}}
    \includegraphics[width=0.53\textwidth]{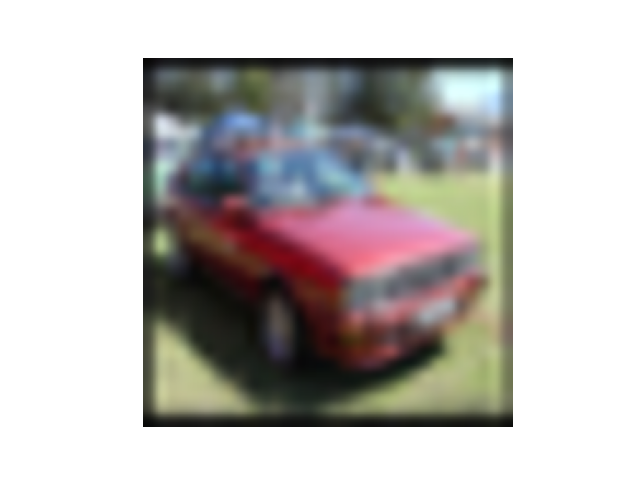} & 
    \end{tabular}
    \caption{ \small Example of a low-resolution, real-world image from CIFAR.}
    \label{fig:imgs/cifar-car-image}
\end{figure*}
\noindent \begin{large} \textbf{Layer 1} \end{large} \\
Num points to be clustered (pixels): 50176 \\
Num components found by MDL: 7 \\
Assign each pixel a label from $0, \dots, 6$, and form patch signatures as multisets of labels inside all patches of size $4 \times 4$. \\
Num patch signatures: 48400 \\
Num unique patch signatures: 304 \\
Entropy of categorical distribution of patch signatures: \textbf{5.6} \\

\noindent \begin{large} \textbf{Layer 2} \end{large} \\
Num points to be clustered: 48400 \\
Num components found by MDL: 6 \\
Assign each point a label from $0, \dots, 5$, and form patch signatures as multisets of labels inside all patches of size $8 \times 8$. \\
Num patch signatures: 44944 \\
Num unique patch signatures: 2615 \\
Entropy of categorical distribution of patch signatures: \textbf{8.67} \\

\noindent \begin{large} \textbf{Layer 3} \end{large} \\
Num points to be clustered: 44944 \\
Num components found by MDL: 8 \\
Assign each pixel a label from $0, \dots, 7$, and form patch signatures as multisets of labels inside all patches of size $16 \times 16$. \\
Num patch signatures: 38416 \\
Num unique patch signatures: 6455 \\
Entropy of categorical distribution of patch signatures: \textbf{11.69} \\

\noindent \begin{large} \textbf{Layer 4} \end{large} \\
Num points to be clustered: 38212 \\
Num components found by MDL: 8 \\
Assign each pixel a label from $0, \dots, 7$, and form patch signatures as multisets of labels inside all patches of size $32 \times 32$. \\
Num patch signatures: 26896 \\
Num unique patch signatures: 6445 \\
Entropy of categorical distribution of patch signatures: \textbf{12.61} \\
\\
\begin{large}
\\
Total complexity: $5.61 + 8.67 + 11.69 + 12.61 = \boldsymbol{38.50}$
\end{large}
\clearpage

\subsection{DTD Fine Woven Texture}
\begin{figure*}[h]
    \centering
    \begin{tabular}{@{}c@{}c@{}}
    \includegraphics[width=0.53\textwidth]{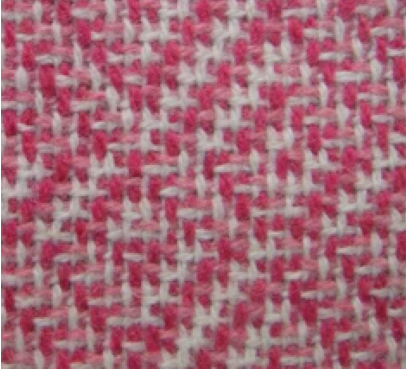} & 
    \end{tabular}
    \caption{ \small Example of a detailed, high-resolution, repetitive pattern from DTD2.}
    \label{fig:imgs/dtd-fine-woven-image}
\end{figure*}
\noindent \begin{large} \textbf{Layer 1} \end{large} \\
Num points to be clustered (pixels): 50290 \\
Num components found by MDL: 8 \\
Assign each pixel a label from $0, \dots, 7$, and form patch signatures as multisets of labels inside all patches of size $4 \times 4$. \\
Num patch signatures: 48400 \\
Num unique patch signatures: 7568 \\
Entropy of categorical distribution of patch signatures: \textbf{12.16} \\

\noindent \begin{large} \textbf{Layer 2} \end{large} \\
Num points to be clustered: 48400 \\
Num components found by MDL: 8 \\
Assign each point a label from $0, \dots, 7$, and form patch signatures as multisets of labels inside all patches of size $8 \times 8$. \\
Num patch signatures: 44944 \\
Num unique patch signatures: 13356 \\
Entropy of categorical distribution of patch signatures: \textbf{13.70} \\

\noindent \begin{large} \textbf{Layer 3} \end{large} \\
Num points to be clustered: 44944 \\
Num components found by MDL: 1 \\
Assign each pixel a label from $0, \dots, 7$, and form patch signatures as multisets of labels inside all patches of size $16 \times 16$. \\
Num patch signatures: 36416 \\
Num unique patch signatures: 1 \\
Entropy of categorical distribution of patch signatures: \textbf{0} \\

\noindent \begin{large} \textbf{Layer 4} \end{large} \\
All patch signatures are identical because only one cluster was found at the previous level. So the entropy is \textbf{0}.
\begin{large}
\\
\\
Total complexity: $12.16 + 13.70 + 0 + 0 = \boldsymbol{25.87}$
\end{large}
\clearpage

\section{Datasets: Further Details}
\subsection{Synthetic Datasets}
As described in Section \ref{sec:experimental-evaluation}, I created three synthetic datasets to help test my image complexity metric on a variety of images. The experimental results I report use 500 images sampled from these synthetic datasets. Here, I give the full details for the creation of these datasets. Code will also be released on publication.

\paragraph{Stripes}
These images depict a repeated striped black-and-white pattern. The thickness of the lines, in pixels, is sampled uniformly at random from $[3,10]$, and the slope of the lines is sampled uniformly at random from $[-0.5, -1.5]$. It is sufficient to consider negative slopes only as our method, and all methods that I compare to, are invariant to reflections, so the striped images with slope in $[0.5, 1.5]$ would receive identical scores to those in $[-0.5, -1.5]$. Note that our method is not necessarily invariant to rotations, because it is based on square, axis-aligned patches of pixels. The same is true of the fractal dimension computed with the Minkowski-Bouligand dimension (i.e., the fractal dimension), as it uses a box-counting method. Examples of Stripes images are shown in Figure \ref{fig:imgs/stripes-images}.
\begin{figure}[h]
    \centering
    \begin{tabular}{@{}c@{}c@{}}
    \includegraphics[width=0.53\textwidth]{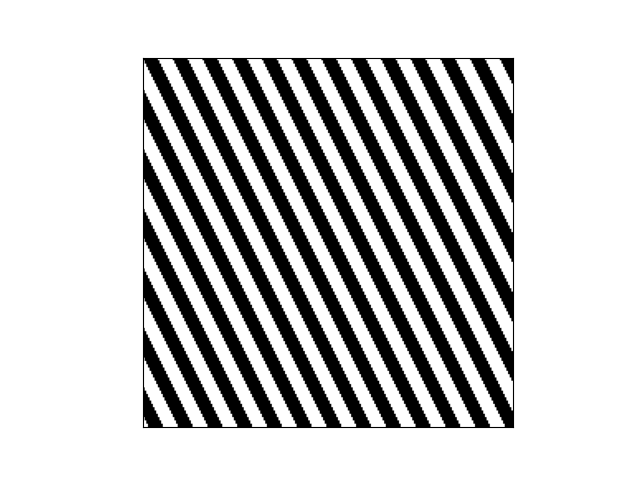} & 
    \includegraphics[width=0.53\textwidth]{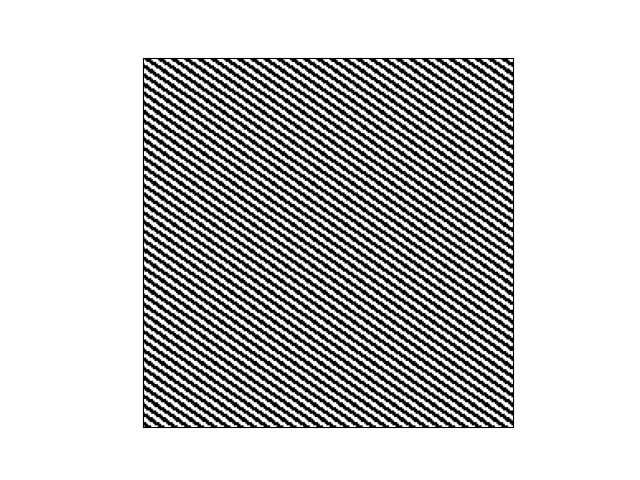}\\
    \includegraphics[width=0.53\textwidth]{imgs/stripes_image4.png} & 
    \includegraphics[width=0.53\textwidth]{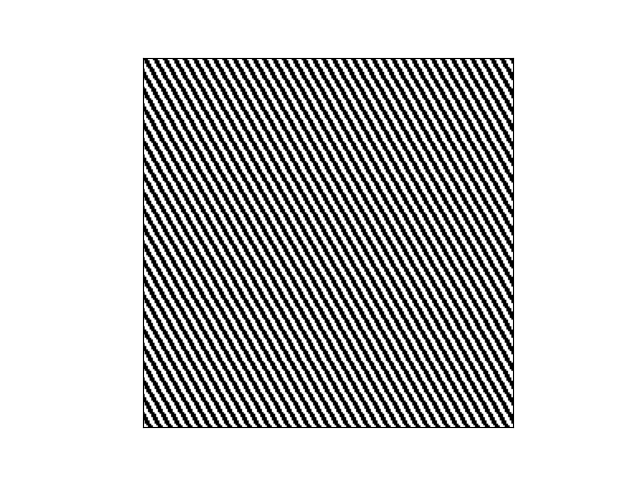}\\
     \includegraphics[width=0.53\textwidth]{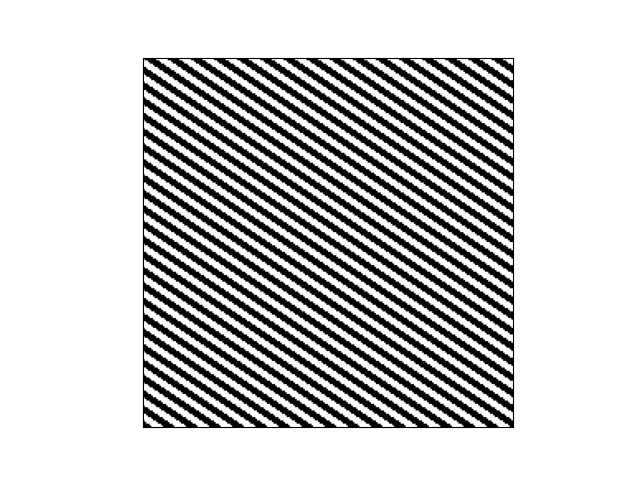} & 
    \includegraphics[width=0.53\textwidth]{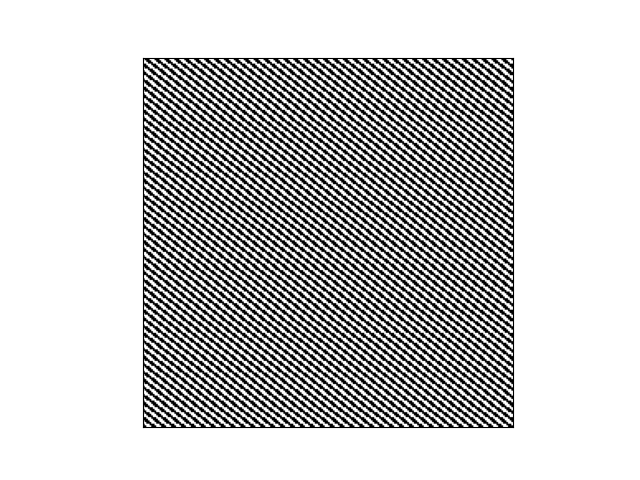}\\
    \end{tabular}
    \caption{ \small Examples of images from our synthetic Stripes dataset. Most existing methods assign theses images a high complexity. Ours assigns them low, but non-zero complexity.}
    \label{fig:imgs/stripes-images}
\end{figure}

\paragraph{Halves}
These images have one half entirely black and the other entirely white, with the dividing line being at various angles. As with Stripes, the slope of this dividing line is sampled uniformly at random from $[-0.5, -1.5]$. Examples are shown in Figure \ref{fig:halves-images}.
\begin{figure}[h]
    \centering
    \begin{tabular}{@{}c@{}c@{}}
    \includegraphics[width=0.53\textwidth]{imgs/halves_image0.png} & 
    \includegraphics[width=0.53\textwidth]{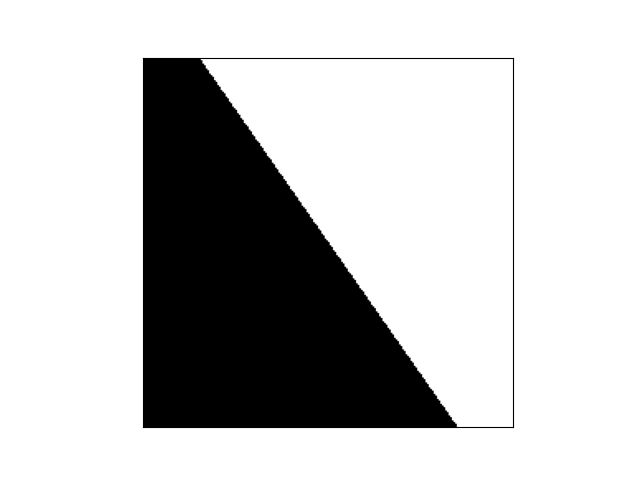}\\
    \includegraphics[width=0.53\textwidth]{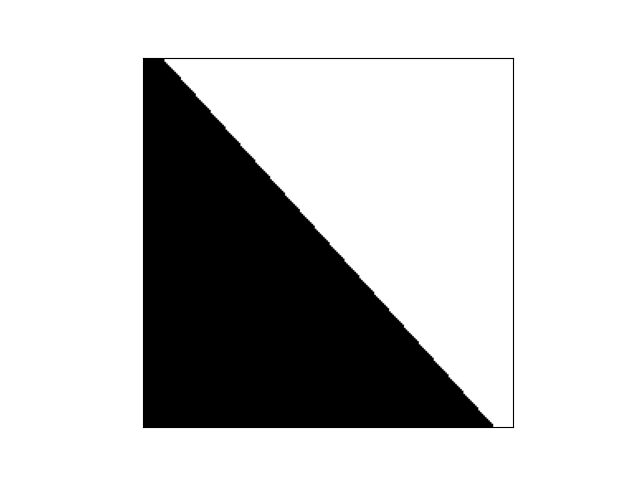} & 
    \includegraphics[width=0.53\textwidth]{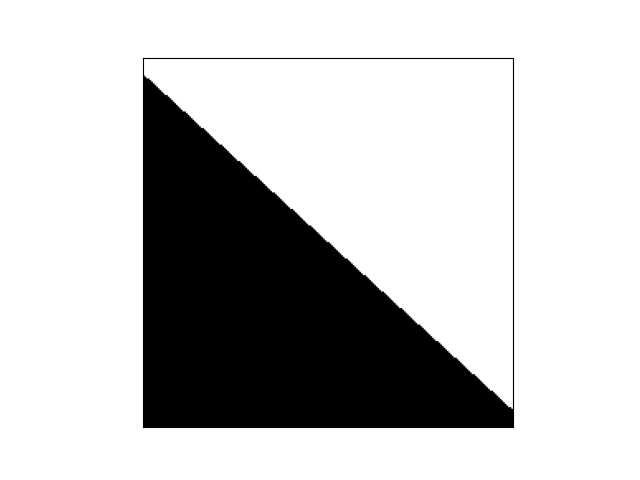}\\
     \includegraphics[width=0.53\textwidth]{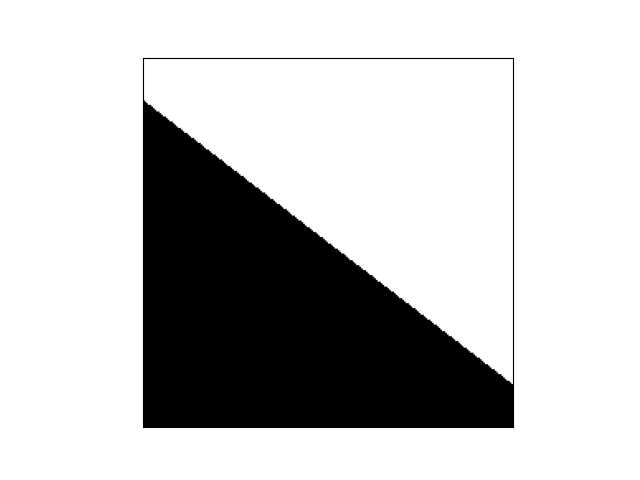} & 
    \includegraphics[width=0.53\textwidth]{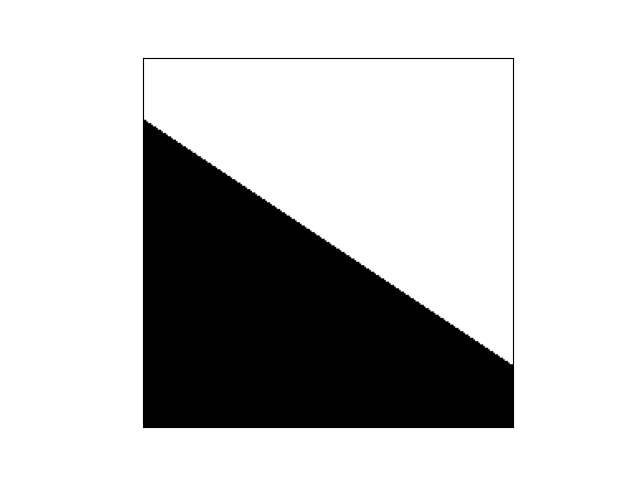}\\
    \end{tabular}
    \caption{ \small Examples of images from our synthetic Halves dataset. Our method assigns these low complexity as do existing methods. However, when we break the method down by scale, as discussed in Section 4.2, we see that it assigns some complexity at a high scale, more so than, e.g., Stripes, because there is some difference between different parts at a high scale, whereas in Stripes, both halves of each image are the same.}
    \label{fig:halves-images}
\end{figure}

\paragraph{Rand}
These images are white noise. Their values are sampled uniformly at random from $[0,1]$, independently for each location and each of three colour channels. Examples are shown in Figure \ref{fig:imgs/rand-images}.

\begin{figure*}[h]
    \centering
    \begin{tabular}{@{}c@{}c@{}}
    \includegraphics[width=0.53\textwidth]{imgs/rand_image0.png} & 
    \includegraphics[width=0.53\textwidth]{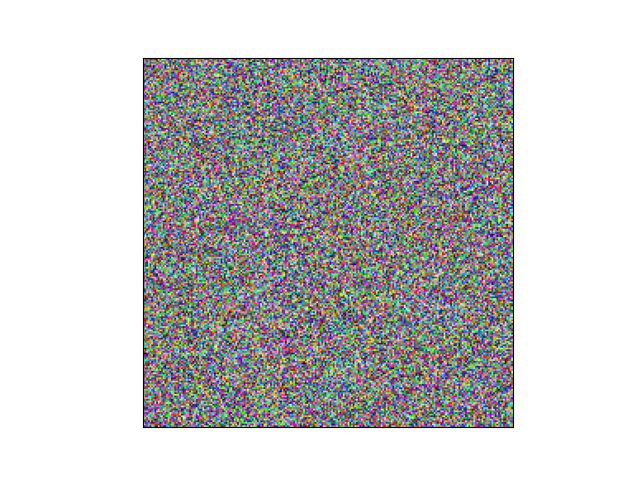}\\
    \includegraphics[width=0.53\textwidth]{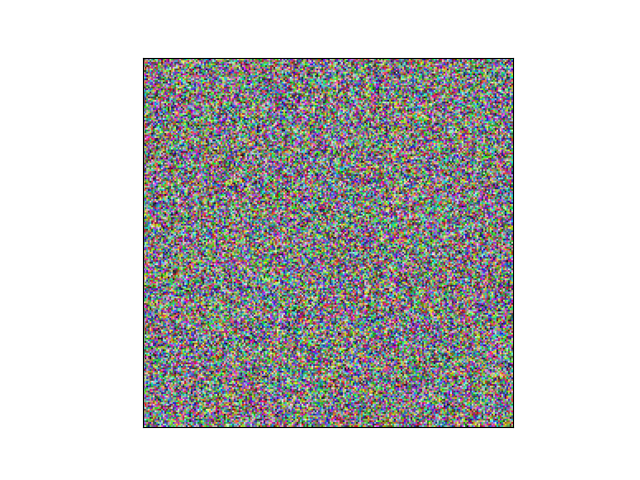} & 
    \includegraphics[width=0.53\textwidth]{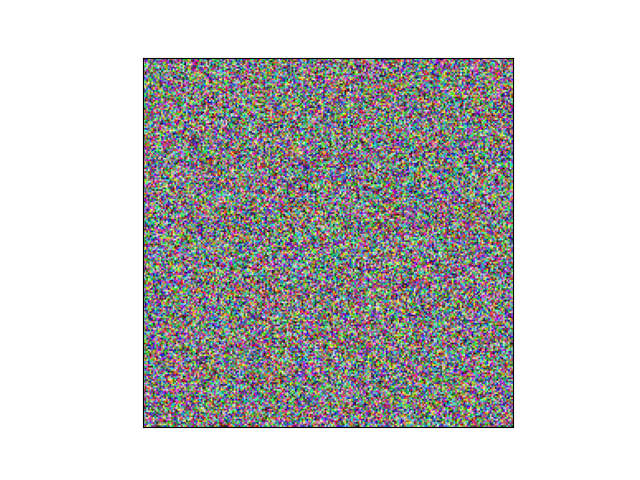}\\
     \includegraphics[width=0.53\textwidth]{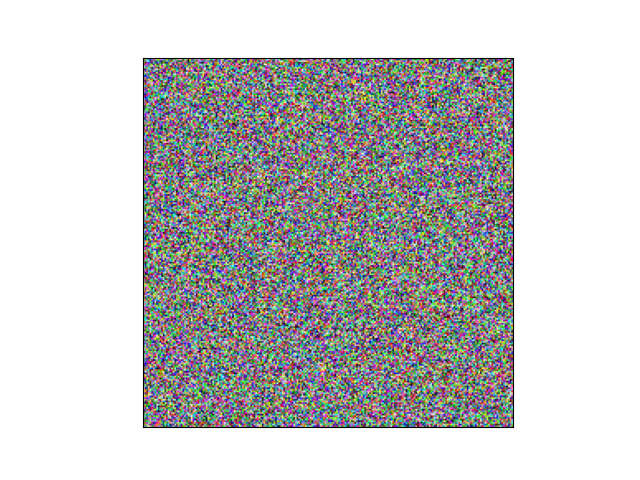} & 
    \includegraphics[width=0.53\textwidth]{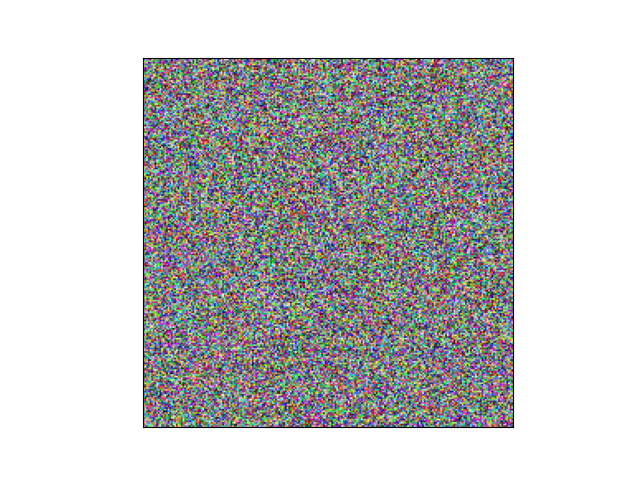}\\
    \end{tabular}
    \caption{ \small Examples of the white noise images I used in the Rand dataset. Existing complexity measures assign these images a very high complexity. Our method, in contrast, gives them all a zero complexity.}
    \label{fig:imgs/rand-images}
\end{figure*}

\clearpage

\subsection{Existing Datasets}
For the experiments in Section 4, I randomly sample 500 images from each of ImageNet, CiFAR, and MNIST. For DTD2, we manually search through all 5640 images in the original Describable Textures Dataset \cite{cimpoi2014describing}, and we find 341 images with fine detailed but repetitive textures. This section contains further information on the images used for each dataset.

\paragraph{ImageNet}
All images are from the Imagenette subset of ImageNet. The list of labels that we present is taken from the Imagenette labels, available at https://s3.amazonaws. com/fast-ai-imageclas/imagenette2.tgz. Most of the images IDs are of the form <wordnet-synset-id> - <index-within-synset>. Some instead use the class label from the 2012 version of ILSVRC.

\begin{multicols}{3}

\begin{itemize}

\item n03394916 - 50642.
\item n02979186 - 10250.
\item n03028079 - 14492.
\item n03888257 - 36631.
\item n03445777 - 3291.
\item n03425413 - 14940.
\item n03417042 - 19472.
\item n03000684 - 9440.
\item n01440764 - 16090.
\item ILSVRC2012-00023440.
\item n03394916 - 39102.
\item n02979186 - 24592.
\item n03028079 - 25712.
\item n03888257 - 16542.
\item n03445777 - 12262.
\item n03425413 - 17220.
\item n03417042 - 3351.
\item n03000684 - 32351.
\item n01440764 - 21191.
\item n02102040 - 1782.
\item n03394916 - 46700.
\item n02979186 - 17680.
\item n03028079 - 7422.
\item n03888257 - 25150.
\item n03445777 - 5582.
\item n03425413 - 8801.
\item n03417042 - 25411.
\item n03000684 - 20052.
\item n01440764 - 5361.
\item n02102040 - 2930.
\item n03394916 - 43381.
\item n02979186 - 20620.
\item n03028079 - 16811.
\item n03888257 - 21201.
\item n03445777 - 11171.
\item n03425413 - 13581.
\item n03417042 - 6420.
\item ILSVRC2012-00045501.
\item n01440764 - 16192.
\item n02102040 - 5890.
\item n03394916 - 43532.
\item n02979186 - 9910.
\item ILSVRC2012-00004912.
\item n03888257 - 22330.
\item n03445777 - 10762.
\item n03425413 - 11061.
\item n03417042 - 9620.
\item n03000684 - 16872.
\item n01440764 - 6812.
\item n02102040 - 3452.
\item n03394916 - 32870.
\item n02979186 - 23650.
\item n03028079 - 27781.
\item n03888257 - 20300.
\item n03445777 - 6091.
\item n03425413 - 20371.
\item n03417042 - 3771.
\item n03000684 - 10992.
\item n01440764 - 13842.
\item n02102040 - 6851.
\item n03394916 - 11582.
\item n02979186 - 18720.
\item n03028079 - 29062.
\item n03888257 - 18441.
\item n03445777 - 520.
\item n03425413 - 21180.
\item n03417042 - 4072.
\item n03000684 - 9452.
\item n01440764 - 14150.
\item n02102040 - 652.
\item n03394916 - 36000.
\item n02979186 - 1810.
\item n03028079 - 4612.
\item n03888257 - 7921.
\item n03445777 - 5932.
\item n03425413 - 12711.
\item n03417042 - 4462.
\item n03000684 - 661.
\item n01440764 - 9152.
\item n02102040 - 7942.
\item n03394916 - 36172.
\item n02979186 - 13740.
\item n03028079 - 9920.
\item n03888257 - 36390.
\item n03445777 - 6162.
\item n03425413 - 12951.
\item n03417042 - 2150.
\item n03000684 - 10690.
\item n01440764 - 14342.
\item n02102040 - 5942.
\item n03394916 - 43422.
\item n02979186 - 13442.
\item n03028079 - 34051.
\item n03888257 - 46870.
\item n03445777 - 13480.
\item n03425413 - 20751.
\item n03417042 - 5920.
\item n03000684 - 18020.
\item n01440764 - 7982.
\item n02102040 - 182.
\item n03394916 - 42721.
\item n02979186 - 11971.
\item n03028079 - 10191.
\item n03888257 - 19580.
\item n03445777 - 1390.
\item n03425413 - 13862.
\item n03417042 - 18582.
\item ILSVRC2012-00045940.
\item n01440764 - 1561.
\item n02102040 - 4090.
\item n03394916 - 33380.
\item n02979186 - 2002.
\item n03028079 - 9682.
\item n03888257 - 9770.
\item n03445777 - 11162.
\item n03425413 - 15321.
\item n03417042 - 14000.
\item n03000684 - 15441.
\item n01440764 - 16051.
\item n02102040 - 6552.
\item n03394916 - 59361.
\item n02979186 - 9811.
\item n03028079 - 29942.
\item n03888257 - 20352.
\item n03445777 - 2611.
\item n03425413 - 11180.
\item n03417042 - 26782.
\item n03000684 - 7222.
\item n01440764 - 19302.
\item ILSVRC2012-00036282.
\item n03394916 - 62451.
\item n02979186 - 140.
\item n03028079 - 49281.
\item n03888257 - 14530.
\item n03445777 - 5240.
\item n03425413 - 21730.
\item n03417042 - 12790.
\item n03000684 - 13402.
\item n01440764 - 4360.
\item n02102040 - 352.
\item n03394916 - 46672.
\item n02979186 - 1542.
\item n03028079 - 15392.
\item n03888257 - 10680.
\item n03445777 - 17492.
\item n03425413 - 16220.
\item n03417042 - 7080.
\item n03000684 - 16291.
\item n01440764 - 2921.
\item n02102040 - 8061.
\item n03394916 - 30072.
\item n02979186 - 5321.
\item n03028079 - 17690.
\item n03888257 - 70632.
\item n03445777 - 9572.
\item n03425413 - 1672.
\item n03417042 - 4761.
\item n03000684 - 18591.
\item n01440764 - 8030.
\item n02102040 - 5641.
\item n03394916 - 50730.
\item n02979186 - 8861.
\item ILSVRC2012-00016542.
\item n03888257 - 3651.
\item n03445777 - 5312.
\item n03425413 - 21362.
\item n03417042 - 8822.
\item n03000684 - 19272.
\item n01440764 - 6421.
\item n02102040 - 960.
\item n03394916 - 26422.
\item n02979186 - 3260.
\item n03028079 - 6110.
\item n03888257 - 33021.
\item n03445777 - 15810.
\item n03425413 - 8661.
\item n03417042 - 21361.
\item n03000684 - 2820.
\item n01440764 - 650.
\item n02102040 - 1791.
\item n03394916 - 52191.
\item n02979186 - 14630.
\item n03028079 - 6722.
\item n03888257 - 142.
\item n03445777 - 11150.
\item n03425413 - 20500.
\item n03417042 - 27630.
\item n03000684 - 15521.
\item n01440764 - 6130.
\item n02102040 - 491.
\item n03394916 - 71910.
\item n02979186 - 8092.
\item n03028079 - 5942.
\item n03888257 - 11222.
\item n03445777 - 2530.
\item n03425413 - 602.
\item n03417042 - 5221.
\item n03000684 - 1970.
\item n01440764 - 13702.
\item n02102040 - 3450.
\item n03394916 - 35320.
\item n02979186 - 16142.
\item n03028079 - 14992.
\item n03888257 - 37362.
\item n03445777 - 6042.
\item n03425413 - 12712.
\item n03417042 - 26850.
\item n03000684 - 180.
\item n01440764 - 12881.
\item n02102040 - 4111.
\item n03394916 - 16601.
\item n02979186 - 4511.
\item n03028079 - 5432.
\item n03888257 - 64711.
\item n03445777 - 7711.
\item n03425413 - 17212.
\item n03417042 - 5510.
\item n03000684 - 19890.
\item n01440764 - 27422.
\item n02102040 - 651.
\item n03394916 - 54570.
\item n02979186 - 11.
\item n03028079 - 16731.
\item n03888257 - 13410.
\item n03445777 - 7090.
\item n03425413 - 13970.
\item n03417042 - 27862.
\item n03000684 - 2340.
\item n01440764 - 3782.
\item n02102040 - 290.
\item n03394916 - 59430.
\item n02979186 - 26820.
\item n03028079 - 3600.
\item n03888257 - 12401.
\item n03445777 - 1750.
\item n03425413 - 14302.
\item n03417042 - 28552.
\item n03000684 - 11511.
\item n01440764 - 20451.
\item n02102040 - 371.
\item n03394916 - 47852.
\item n02979186 - 3472.
\item n03028079 - 8572.
\item n03888257 - 14901.
\item n03445777 - 3301.
\item n03425413 - 14510.
\item n03417042 - 2141.
\item n03000684 - 31112.
\item n01440764 - 2102.
\item n02102040 - 2572.
\item n03394916 - 38680.
\item n02979186 - 1200.
\item n03028079 - 17922.
\item n03888257 - 15382.
\item n03445777 - 13462.
\item n03425413 - 20121.
\item n03417042 - 15592.
\item n03000684 - 31721.
\item n01440764 - 32420.
\item n02102040 - 1830.
\item n03394916 - 35811.
\item n02979186 - 12072.
\item n03028079 - 46322.
\item n03888257 - 28581.
\item n03445777 - 602.
\item n03425413 - 32871.
\item n03417042 - 18042.
\item n03000684 - 6220.
\item n01440764 - 17501.
\item n02102040 - 7392.
\item n03394916 - 36361.
\item n02979186 - 22761.
\item n03028079 - 24471.
\item n03888257 - 13790.
\item n03445777 - 7930.
\item n03425413 - 21040.
\item n03417042 - 1330.
\item n03000684 - 1542.
\item n01440764 - 8302.
\item n02102040 - 6081.
\item n03394916 - 27071.
\item n02979186 - 5781.
\item ILSVRC2012-00034021.
\item n03888257 - 38102.
\item n03445777 - 16321.
\item n03425413 - 20562.
\item n03417042 - 4560.
\item n03000684 - 6471.
\item n01440764 - 762.
\item n02102040 - 2110.
\item n03394916 - 44882.
\item n02979186 - 5481.
\item n03028079 - 9220.
\item n03888257 - 19211.
\item n03445777 - 14301.
\item n03425413 - 19050.
\item n03417042 - 6691.
\item n03000684 - 2972.
\item n01440764 - 10040.
\item n02102040 - 430.
\item n03394916 - 46391.
\item n02979186 - 13281.
\item n03028079 - 16820.
\item n03888257 - 30712.
\item n03445777 - 14232.
\item n03425413 - 21562.
\item n03417042 - 29412.
\item n03000684 - 13182.
\item n01440764 - 10852.
\item n02102040 - 5101.
\item n03394916 - 29940.
\item n02979186 - 2841.
\item n03028079 - 23280.
\item n03888257 - 23192.
\item n03445777 - 2041.
\item n03425413 - 14570.
\item n03417042 - 20280.
\item n03000684 - 8411.
\item n01440764 - 7492.
\item n02102040 - 6532.
\item n03394916 - 28590.
\item n02979186 - 560.
\item n03028079 - 38692.
\item n03888257 - 23571.
\item n03445777 - 13680.
\item ILSVRC2012-00000732.
\item n03417042 - 18551.
\item n03000684 - 34440.
\item n01440764 - 522.
\item ILSVRC2012-00008162.
\item n03394916 - 1091.
\item n02979186 - 10151.
\item n03028079 - 12802.
\item n03888257 - 171.
\item n03445777 - 7670.
\item n03425413 - 21202.
\item n03417042 - 9601.
\item ILSVRC2012-00029211.
\item n01440764 - 5432.
\item n02102040 - 4732.
\item n03394916 - 292.
\item n02979186 - 5460.
\item n03028079 - 3700.
\item n03888257 - 35800.
\item n03445777 - 9921.
\item n03425413 - 21911.
\item n03417042 - 5090.
\item n03000684 - 19211.
\item n01440764 - 8601.
\item n02102040 - 7841.
\item n03394916 - 27932.
\item n02979186 - 3161.
\item n03028079 - 29012.
\item n03888257 - 17340.
\item n03445777 - 10782.
\item n03425413 - 11161.
\item n03417042 - 29722.
\item n03000684 - 1490.
\item n01440764 - 4962.
\item n02102040 - 7792.
\item n03394916 - 47110.
\item n02979186 - 16952.
\item n03028079 - 28242.
\item n03888257 - 29762.
\item n03445777 - 230.
\item n03425413 - 3021.
\item n03417042 - 10462.
\item n03000684 - 2060.
\item n01440764 - 6301.
\item n02102040 - 2480.
\item n03394916 - 44580.
\item n02979186 - 20362.
\item n03028079 - 3492.
\item n03888257 - 30412.
\item n03445777 - 13831.
\item n03425413 - 20301.
\item n03417042 - 10280.
\item n03000684 - 16861.
\item n01440764 - 9212.
\item n02102040 - 6152.
\item n03394916 - 34332.
\item n02979186 - 14251.
\item n03028079 - 9320.
\item n03888257 - 35890.
\item n03445777 - 5131.
\item n03425413 - 16221.
\item n03417042 - 5381.
\item n03000684 - 3470.
\item n01440764 - 8142.
\item n02102040 - 762.
\item n03394916 - 51071.
\item n02979186 - 20160.
\item n03028079 - 25542.
\item n03888257 - 57010.
\item n03445777 - 261.
\item n03425413 - 7731.
\item n03417042 - 3821.
\item ILSVRC2012-00047060.
\item n01440764 - 12971.
\item n02102040 - 1300.
\item n03394916 - 7292.
\item n02979186 - 23362.
\item n03028079 - 10020.
\item ILSVRC2012-00038942.
\item n03445777 - 11690.
\item n03425413 - 13100.
\item n03417042 - 6811.
\item n03000684 - 20762.
\item n01440764 - 11350.
\item n02102040 - 1822.
\item n03394916 - 33012.
\item n02979186 - 1061.
\item n03028079 - 16660.
\item n03888257 - 38200.
\item n03445777 - 10671.
\item n03425413 - 6772.
\item n03417042 - 1492.
\item n03000684 - 24991.
\item n01440764 - 7462.
\item n02102040 - 362.
\item n03394916 - 26802.
\item n02979186 - 3530.
\item n03028079 - 80.
\item n03888257 - 66102.
\item n03445777 - 8192.
\item ILSVRC2012-00035211.
\item n03417042 - 10300.
\item n03000684 - 16072.
\item n01440764 - 8451.
\item n02102040 - 3260.
\item ILSVRC2012-00025761.
\item n02979186 - 5031.
\item n03028079 - 10241.
\item n03888257 - 12400.
\item n03445777 - 6201.
\item n03425413 - 260.
\item n03417042 - 2062.
\item n03000684 - 27850.
\item n01440764 - 9491.
\item n02102040 - 821.
\item n03394916 - 32340.
\item n02979186 - 1932.
\item n03028079 - 26291.
\item n03888257 - 9552.
\item n03445777 - 101.
\item n03425413 - 1792.
\item n03417042 - 18152.
\item n03000684 - 5041.
\item n01440764 - 4980.
\item n02102040 - 3532.
\item n03394916 - 6742.
\item n02979186 - 22882.
\item n03028079 - 25462.
\item n03888257 - 8381.
\item n03445777 - 5382.
\item n03425413 - 13232.
\item n03417042 - 9170.
\item n03000684 - 17330.
\item n01440764 - 6361.
\item n02102040 - 142.
\item n03394916 - 51161.
\item n02979186 - 15931.
\item n03028079 - 28662.
\item n03888257 - 12070.
\item n03445777 - 10401.
\item n03425413 - 4511.
\item n03417042 - 1601.
\item n03000684 - 10212.
\item n01440764 - 7752.
\item n02102040 - 1110.
\item n03394916 - 38212.
\item n02979186 - 1621.
\item n03028079 - 2060.
\item n03888257 - 7610.
\item n03445777 - 7902.
\item n03425413 - 21211.
\item n03417042 - 3390.
\item n03000684 - 11821.
\item n01440764 - 8221.
\item n02102040 - 350.
\item n03394916 - 37321.
\item n02979186 - 2312.
\item n03028079 - 16501.
\item n03888257 - 11081.
\item n03445777 - 471.
\item n03425413 - 24461.
\item n03417042 - 6272.
\item n03000684 - 6460.
\item n01440764 - 7160.
\item n02102040 - 3112.
\item n03394916 - 33221.
\item n02979186 - 15972.
\item n03028079 - 9112.
\item n03888257 - 7130.
\item n03445777 - 8861.
\item n03425413 - 14552.
\item n03417042 - 2960.
\item n03000684 - 5231.
\item n01440764 - 16072.
\item n02102040 - 672.
\end{itemize}
\end{multicols}

\paragraph{DTD2}
The Describable Textures Dataset contains 47 classes, grouped according to texture: bumpy, dotted, lined, veined etc. The image ids below are of the form <class-id> - <index-within-class>. Unsurprisingly, most of the suitable images, i.e., those with detailed repeating textures, are from classes such as `woven', `grid', or `wrinkled'. The number of images of each class is given below.

\begin{figure}[h]
    \centering
    \includegraphics[width=\textwidth]{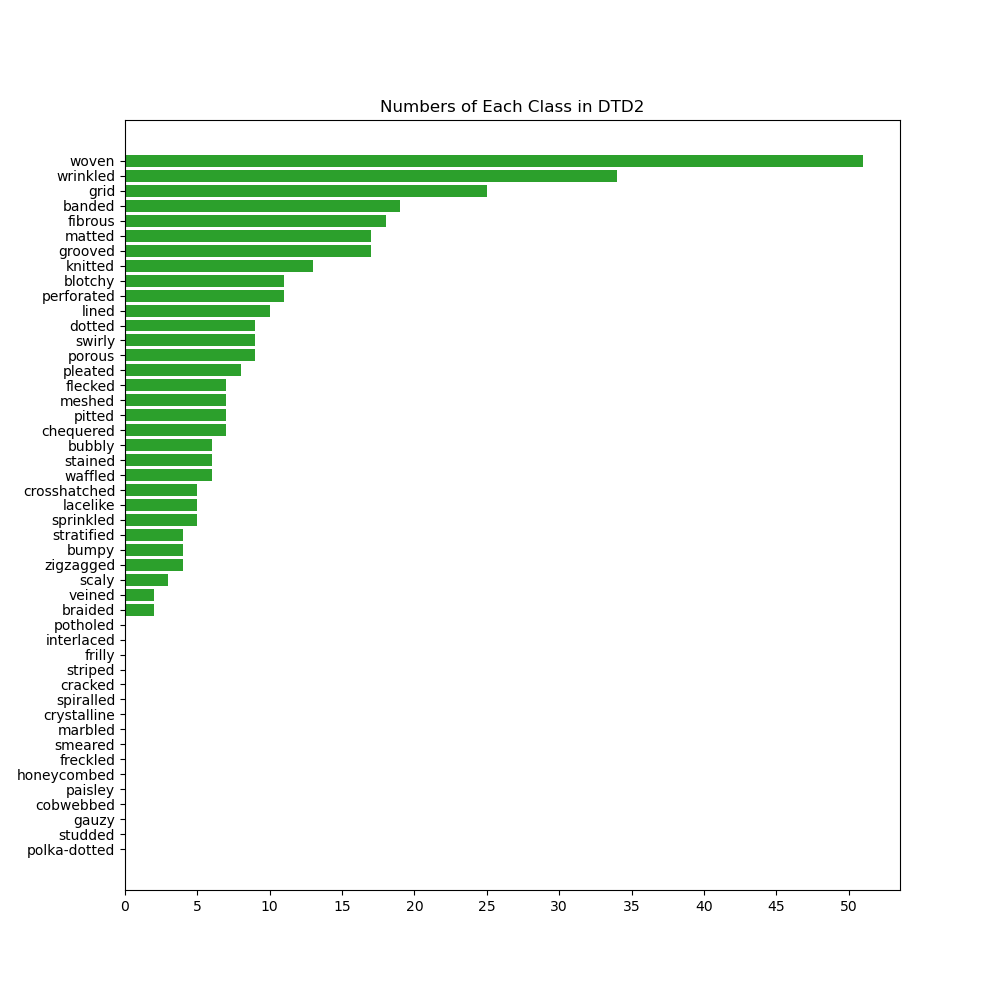}
    \vspace{-16pt}
    \caption{Number of images of each class from DTD that we select to be part of our curated dataset, DTD2. We select images that show a detailed, repeating pattern, meaning some types of textures are much more likely to be selected.}
    \label{fig:dtd2-class-counts}
\end{figure}

\begin{multicols}{2}
\begin{itemize}
\item perforated - 0074.
\item meshed - 0164.
\item dotted - 0164.
\item sprinkled - 0066.
\item porous - 0117.
\item woven - 0106.
\item knitted - 0185.
\item crosshatched - 0081.
\item pleated - 0163.
\item banded - 0046.
\item wrinkled - 0129.
\item banded - 0055.
\item braided - 0008.
\item grooved - 0119.
\item dotted - 0131.
\item wrinkled - 0015.
\item bumpy - 0098.
\item woven - 0004.
\item zigzagged - 0109.
\item matted - 0136.
\item stratified - 0174.
\item grooved - 0051.
\item perforated - 0014.
\item grooved - 0089.
\item woven - 0025.
\item wrinkled - 0106.
\item lined - 0159.
\item banded - 0008.
\item matted - 0070.
\item lined - 0109.
\item dotted - 0192.
\item fibrous - 0138.
\item matted - 0150.
\item pleated - 0142.
\item grid - 0088.
\item blotchy - 0038.
\item chequered - 0043.
\item banded - 0081.
\item wrinkled - 0063.
\item waffled - 0156.
\item grid - 0073.
\item grid - 0016.
\item lacelike - 0078.
\item matted - 0071.
\item chequered - 0050.
\item wrinkled - 0132.
\item porous - 0149.
\item stained - 0119.
\item knitted - 0118.
\item pleated - 0116.
\item stained - 0066.
\item knitted - 0144.
\item chequered - 0062.
\item grooved - 0085.
\item blotchy - 0091.
\item knitted - 0150.
\item grid - 0124.
\item pleated - 0168.
\item zigzagged - 0133.
\item grooved - 0058.
\item zigzagged - 0008.
\item stained - 0132.
\item blotchy - 0088.
\item bumpy - 0067.
\item grid - 0049.
\item woven - 0062.
\item blotchy - 0059.
\item matted - 0069.
\item lined - 0133.
\item woven - 0075.
\item bubbly - 0097.
\item matted - 0073.
\item porous - 0151.
\item blotchy - 0083.
\item chequered - 0052.
\item wrinkled - 0041.
\item lacelike - 0096.
\item matted - 0085.
\item fibrous - 0150.
\item banded - 0122.
\item waffled - 0124.
\item fibrous - 0164.
\item grid - 0083.
\item fibrous - 0193.
\item dotted - 0060.
\item meshed - 0176.
\item woven - 0043.
\item woven - 0088.
\item stained - 0090.
\item wrinkled - 0045.
\item pleated - 0090.
\item zigzagged - 0085.
\item veined - 0135.
\item dotted - 0132.
\item stratified - 0046.
\item woven - 0061.
\item woven - 0028.
\item swirly - 0074.
\item matted - 0065.
\item sprinkled - 0065.
\item waffled - 0068.
\item grooved - 0048.
\item perforated - 0066.
\item grid - 0022.
\item woven - 0053.
\item porous - 0152.
\item fibrous - 0160.
\item woven - 0055.
\item matted - 0148.
\item pitted - 0134.
\item flecked - 0060.
\item lacelike - 0020.
\item grid - 0052.
\item woven - 0067.
\item knitted - 0192.
\item flecked - 0053.
\item chequered - 0054.
\item chequered - 0088.
\item lined - 0027.
\item stained - 0030.
\item knitted - 0146.
\item grid - 0078.
\item blotchy - 0070.
\item swirly - 0060.
\item perforated - 0057.
\item porous - 0098.
\item wrinkled - 0087.
\item blotchy - 0096.
\item grooved - 0081.
\item wrinkled - 0039.
\item lined - 0041.
\item flecked - 0126.
\item lined - 0038.
\item dotted - 0135.
\item pitted - 0036.
\item wrinkled - 0103.
\item wrinkled - 0034.
\item grid - 0050.
\item bubbly - 0083.
\item woven - 0001.
\item knitted - 0116.
\item pleated - 0069.
\item wrinkled - 0043.
\item woven - 0059.
\item knitted - 0079.
\item matted - 0128.
\item lacelike - 0017.
\item fibrous - 0165.
\item wrinkled - 0088.
\item grid - 0129.
\item blotchy - 0090.
\item wrinkled - 0017.
\item sprinkled - 0038.
\item woven - 0032.
\item flecked - 0074.
\item woven - 0029.
\item knitted - 0130.
\item crosshatched - 0092.
\item lacelike - 0065.
\item knitted - 0141.
\item grid - 0011.
\item porous - 0099.
\item woven - 0039.
\item woven - 0113.
\item fibrous - 0211.
\item sprinkled - 0067.
\item wrinkled - 0125.
\item crosshatched - 0093.
\item dotted - 0154.
\item woven - 0130.
\item veined - 0075.
\item meshed - 0181.
\item fibrous - 0103.
\item fibrous - 0183.
\item woven - 0082.
\item woven - 0099.
\item perforated - 0041.
\item grid - 0099.
\item grooved - 0084.
\item meshed - 0162.
\item wrinkled - 0036.
\item banded - 0147.
\item porous - 0157.
\item wrinkled - 0108.
\item dotted - 0185.
\item grid - 0089.
\item grid - 0101.
\item woven - 0048.
\item grid - 0066.
\item bumpy - 0190.
\item matted - 0166.
\item woven - 0104.
\item waffled - 0171.
\item wrinkled - 0040.
\item flecked - 0135.
\item swirly - 0151.
\item stratified - 0115.
\item perforated - 0045.
\item woven - 0026.
\item fibrous - 0111.
\item swirly - 0065.
\item perforated - 0026.
\item banded - 0107.
\item woven - 0068.
\item banded - 0037.
\item fibrous - 0204.
\item wrinkled - 0026.
\item waffled - 0178.
\item woven - 0108.
\item grooved - 0164.
\item woven - 0021.
\item fibrous - 0127.
\item banded - 0141.
\item scaly - 0131.
\item woven - 0123.
\item braided - 0167.
\item woven - 0046.
\item grooved - 0057.
\item perforated - 0024.
\item swirly - 0137.
\item grid - 0081.
\item bubbly - 0118.
\item grooved - 0108.
\item wrinkled - 0079.
\item flecked - 0003.
\item fibrous - 0120.
\item wrinkled - 0114.
\item woven - 0083.
\item fibrous - 0110.
\item wrinkled - 0067.
\item lined - 0169.
\item wrinkled - 0025.
\item wrinkled - 0021.
\item wrinkled - 0013.
\item dotted - 0041.
\item woven - 0049.
\item lined - 0076.
\item scaly - 0122.
\item grid - 0059.
\item waffled - 0081.
\item matted - 0117.
\item fibrous - 0101.
\item stained - 0075.
\item woven - 0036.
\item wrinkled - 0086.
\item wrinkled - 0084.
\item banded - 0047.
\item banded - 0068.
\item matted - 0155.
\item perforated - 0080.
\item pitted - 0078.
\item pitted - 0008.
\item fibrous - 0089.
\item sprinkled - 0068.
\item woven - 0084.
\item grooved - 0093.
\item woven - 0056.
\item pitted - 0064.
\item wrinkled - 0046.
\item woven - 0109.
\item banded - 0086.
\item grid - 0084.
\item grid - 0116.
\item woven - 0038.
\item pleated - 0082.
\item bumpy - 0140.
\item wrinkled - 0111.
\item matted - 0115.
\item fibrous - 0096.
\item woven - 0093.
\item swirly - 0144.
\item banded - 0059.
\item lined - 0141.
\item woven - 0003.
\item banded - 0114.
\item woven - 0002.
\item woven - 0127.
\item knitted - 0126.
\item banded - 0115.
\item woven - 0051.
\item lined - 0166.
\item bubbly - 0084.
\item flecked - 0165.
\item wrinkled - 0105.
\item woven - 0114.
\item woven - 0126.
\item grooved - 0063.
\item wrinkled - 0083.
\item grooved - 0088.
\item grid - 0032.
\item wrinkled - 0065.
\item grid - 0067.
\item perforated - 0016.
\item meshed - 0108.
\item blotchy - 0082.
\item grooved - 0045.
\item swirly - 0147.
\item grooved - 0068.
\item woven - 0065.
\item grid - 0085.
\item blotchy - 0089.
\item wrinkled - 0085.
\item woven - 0107.
\item stratified - 0100.
\item fibrous - 0201.
\item scaly - 0137.
\item woven - 0071.
\item perforated - 0119.
\item swirly - 0138.
\item grooved - 0083.
\item matted - 0072.
\item grid - 0093.
\item chequered - 0093.
\item knitted - 0098.
\item matted - 0084.
\item pitted - 0010.
\item porous - 0053.
\item matted - 0129.
\item woven - 0066.
\item crosshatched - 0109.
\item wrinkled - 0033.
\item pitted - 0157.
\item porous - 0142.
\item woven - 0092.
\item crosshatched - 0116.
\item swirly - 0159.
\item grid - 0082.
\item banded - 0099.
\item meshed - 0112.
\item knitted - 0155.
\item woven - 0112.
\item meshed - 0161.
\item banded - 0061.
\item bubbly - 0055.
\item woven - 0063.
\item bubbly - 0096.
\item blotchy - 0041.
\item banded - 0002.
\item woven - 0007.
\item banded - 0090.
\item pleated - 0094.
\end{itemize}
\end{multicols}
\paragraph{CIFAR10 and MNIST}
For CIFAR and MNIST, we report the distribution of classes in our random sample of 500, in Tables \ref{tab:cifar-counts} and \ref{tab:mnist-counts}, respectively.

\begin{table}
    \caption{Counts of each class in our random sample of 500 CIFAR images.}
    
    \smallskip 
    \label{tab:cifar-counts}
    \centering
    \begin{tabular}{c|*{10}{c}}
    \toprule
        \textbf{class} & airplane & car & bird & cat & deer & dog & frog & horse & ship & truck \\
         \midrule
        \textbf{count} &  66 & 66 & 48 & 48 & 48 & 47 & 49 & 40 & 48 & 40 \\
        \bottomrule
    \end{tabular}
    \end{table}

\begin{table}
    \caption{Counts of each class in our random sample of 500 MNIST images.}
    
    \smallskip 
    \label{tab:mnist-counts}
    \centering
    \begin{tabular}{c|*{10}{c}}
    \toprule
           \textbf{class} & 0 & 1 & 2 & 3 & 4 & 5 & 6 & 7 & 8 & 9 \\
         \midrule
        \textbf{count} &  47 & 61 & 57 & 52 & 47 & 45 & 41 & 50 & 53 & 47 \\
        \bottomrule
    \end{tabular}
    \end{table}

\end{document}